\documentclass{article}



    \usepackage[preprint]{neurips_2025}



\usepackage{comment}
\usepackage[utf8]{inputenc} 
\usepackage[T1]{fontenc}    
\usepackage{hyperref}       
\usepackage{url}            
\usepackage{booktabs}       
\usepackage{amsfonts}       
\usepackage{nicefrac}       
\usepackage{microtype}      
\usepackage{xcolor}         

\usepackage{amsmath}
\usepackage{algorithm}
\usepackage{algpseudocode}
\usepackage{graphicx}
\usepackage{subcaption}  
\usepackage{multirow}
\usepackage{enumitem}
\usepackage{mathabx}
\usepackage{amsmath,amsfonts,bm}
\usepackage{wrapfig}
\usepackage{cleveref}

\newtheorem{lemma}{Lemma}
\newtheorem{theorem}{Theorem}
\newtheorem{definition}{Definition}
\def \endprf{\hfill {\vrule height6pt width6pt depth0pt}\medskip}
\newenvironment{proof}{\noindent {\bf Proof.} }{\endprf\par}


\usepackage{spverbatim}
\usepackage{tikz}
\usepackage{xspace}
\definecolor{darkred}{RGB}{150,0,0}
\definecolor{darkgreen}{RGB}{0,150,0}
\definecolor{darkblue}{RGB}{0,0,200}
\hypersetup{colorlinks=true, linkcolor=darkred, citecolor=darkgreen, urlcolor=darkblue}
\newcommand{\red}{\textcolor{black}}

\newcommand{\so}[1]{\textcolor{black}{#1}}

\newcommand{\alg}{\textsc{BREAD}\xspace}
\def\eqref#1{equation~\ref{#1}}









\def\1{\bm{1}}

\def\eps{{\epsilon}}










\def\mP{{\bm{P}}}

\DeclareMathAlphabet{\mathsfit}{\encodingdefault}{\sfdefault}{m}{sl}
\SetMathAlphabet{\mathsfit}{bold}{\encodingdefault}{\sfdefault}{bx}{n}


\def\gM{{\mathcal{M}}}

\def\gP{{\mathcal{P}}}

\def\gS{{\mathcal{S}}}



\def\sP{{\mathbb{P}}}








\newcommand{\E}{\mathbb{E}}




\title{%
\vspace{-10pt}
  \begin{minipage}[t]{1.5em}
    \includegraphics[height=6ex]{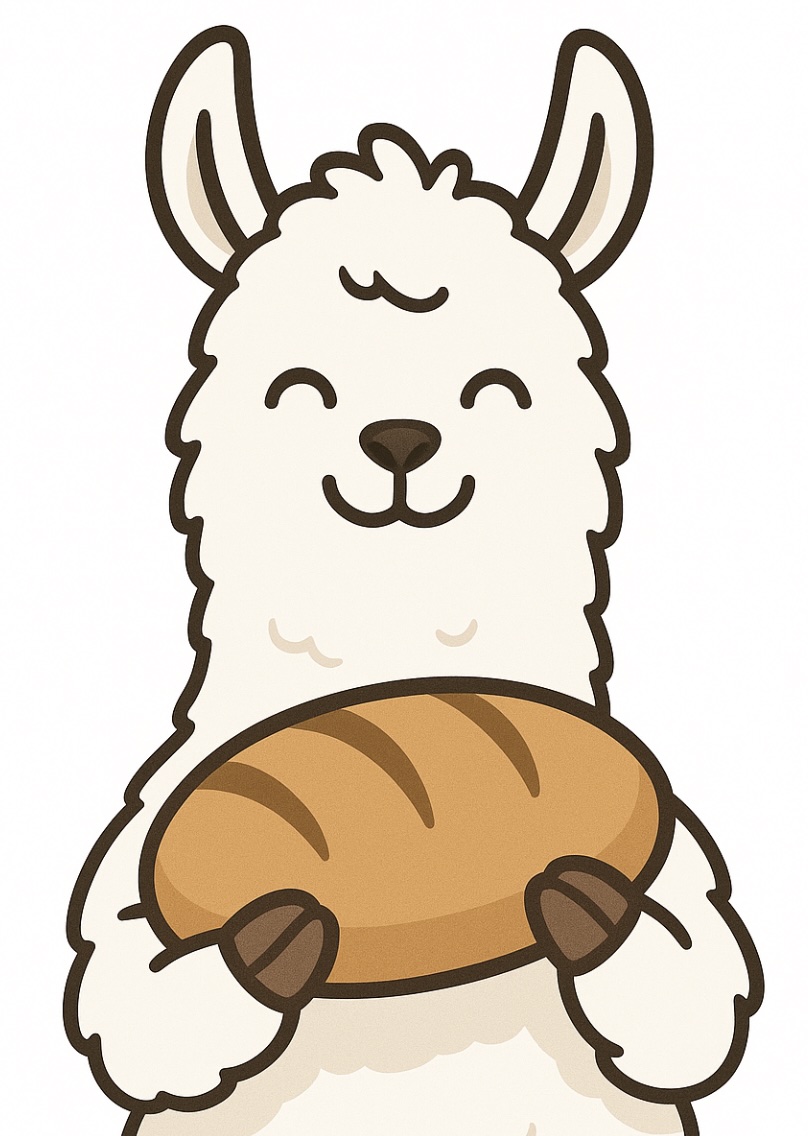}
  \end{minipage}%
  \hspace{0.5em}%
  \begin{minipage}[t]{0.9\textwidth}
    \centering\vspace{-40pt}
    BREAD: Branched Rollouts from Expert Anchors Bridge SFT \& RL for Reasoning
  \end{minipage}\vspace{-10pt}
}
%

\author{%
  Xuechen Zhang\thanks{These authors contributed equally to this work. Correspondance to \{zxuechen,zijianh,oymak\}@umich.edu} \\
  \\
  \And
  Zijian Huang\footnotemark[1] \\
  \\
  \And
  Yingcong Li \\
 \\
  \AND
  \hspace{12pt}Chenshun Ni \\
  \\
  \And
  \hspace*{-13pt}Jiasi Chen \\\\
  \begin{minipage}[t]{1.5em}
  \includegraphics[height=2.8ex]{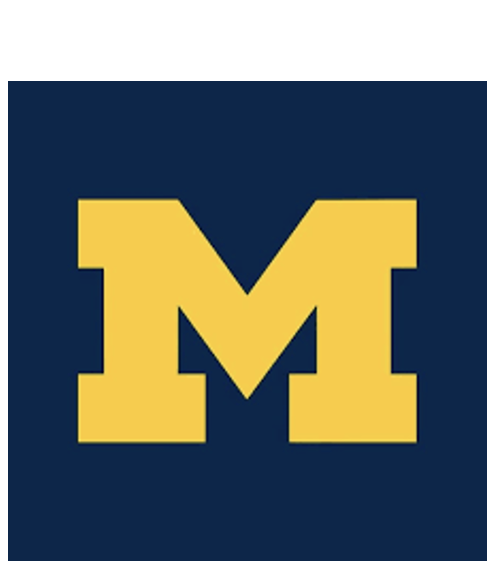} 
  \end{minipage}~
  \begin{minipage}[t]{100pt}\vspace{-9pt}
  \hspace{-5pt}{University of Michigan}
  \end{minipage}\\
  \And
  \hspace*{-15pt}Samet Oymak \\
 \\
}

\begin{document}

\maketitle

\begin{abstract}
Small language models (SLMs) struggle to learn complex reasoning behaviors, especially when high-quality traces are scarce or difficult to learn from. The standard training approach combines a supervised fine-tuning (SFT) stage, often to distill capabilities of a larger model, followed by a reinforcement learning (RL) stage such as Group Relative Policy Optimization (GRPO). In this paper, we investigate the fundamental limitations of this SFT + RL paradigm and propose methods to overcome them. Under a suitable theoretical model, we demonstrate that the SFT + RL strategy can fail completely when (1) the expert's traces are too difficult for the small model to express, or (2) the small model's initialization has exponentially small likelihood of success. To address these, we introduce BREAD: a GRPO variant that unifies the SFT and RL stages via partial expert guidance and branched rollouts. When self-generated traces fail, BREAD adaptively inserts short expert prefixes/hints, allowing the small model to complete the rest of the reasoning path, and ensuring that each update includes at least one successful trace. This mechanism both densifies the reward signal and induces a natural learning curriculum. BREAD requires fewer than 40\% of ground-truth traces, consistently outperforming standard GRPO while speeding up the training by about 3$\times$. Importantly, we demonstrate that BREAD helps the model solve problems that are otherwise unsolvable by the SFT + RL strategy, highlighting how branched rollouts and expert guidance can substantially boost SLM reasoning.
\end{abstract}
\vspace{-5pt}\section{Introduction}
\label{sec:intro}\vspace{-5pt}
Over the past few years, we have witnessed a significant push toward enhancing language model reasoning, which has led to highly capable frontier models such as OpenAI o1 \cite{jaech2024openai}, Gemini 2.5 \cite{gemini25_blog}, and DeepSeek R1 \cite{guo2025deepseek}. These models can generate longer chain-of-thought (CoT) traces and utilize more test-time compute to tackle challenging tasks \cite{muennighoff2025s1}. Despite these innovations, reasoning with small language models (SLM) remains a challenge. For instance, the large DeepSeek-R1 \cite{guo2025deepseek} has 671B parameters in total whereas the SFT-distilled model sizes range from 1.5B to 70B, and their performance substantially degrades at the 1.5B model (see Table 5 in \cite{guo2025deepseek}).

This work studies optimization strategies to enhance SLMs, with emphasis on reasoning tasks.
Two popular optimization strategies for training LLMs are supervised fine-tuning (SFT) and reinforcement learning (such as GRPO or proximal policy optimization).
Often, an SFT phase is employed, followed by an RL phase.
While this two-stage procedure has found success, for SLMs, long-context reasoning problems can pose unique challenges due to the misalignment between the expert and student models and potentially sparse rewards. 

For example, consider a scenario in which each token generated by an expert/teacher model requires the smaller student model to produce $K$ intermediate tokens to express it. In other words, the expert thinks and outputs $K$ steps ahead, from the student's point of view. In practice, this situation can arise when the expert model is a $\times K$ deeper/looped version of the student. Naturally, such expert traces might be too challenging for the small model to learn from\footnote{In practice, SFT+RL can work well for much of the dataset but might fail on a subset of a difficult problems. See \Cref{subsec:main_results} for empirical evidence and evaluations.}. On the other hand, success of the RL phase often relies on good supervised initialization during the SFT phase. In \Cref{subsection:toy}, we provide a mathematical setting capturing this intuition and demonstrate that SFT + RL can fail for small models (see \Cref{fig:theory}), especially on difficult problems.

To address the difficulties of small models in learning from complex traces during fine-tuning, we propose our algorithm, \underline{B}ranch \underline{R}ollouts and \underline{E}xpert \underline{A}nchors for \underline{D}ensified rewards, depicted in Figures \ref{fig:fig1} and \ref{fig:algo}. \alg gracefully integrates the SFT and RL phases by anchoring the optimization process with the expert traces, while allowing the small base model to acquire progressively more flexibility as it becomes a stronger problem solver. Assuming that an expert trace is available (e.g.~by querying a large expert model), \alg 
updates the model with a 
correct trace; however, its traces are progressively more self-generated. 



\textbf{Contributions:} Our specific contributions are as follows:
\vspace{-5pt}
\begin{itemize}[leftmargin=*]
\item \textbf{Methodology:} We introduce \alg: a GRPO variant that generalizes conventional SFT and RL phases through the use of branched rollouts to automatically adapt to the problem difficulty. BREAD induces a learning curriculum along the reasoning trace, thereby densifying the reward signal and tackling compositional tasks step-by-step. We investigate the theoretical properties of \alg in a suitable student-expert setting. In this setting, we establish how the expert model's trace can be uninformative to the student and how the subsequent RL phase can fail due to sparse rewards. In contrast, we show that BREAD can solve the target problem efficiently (low training cost) and concisely (short trace length).


\item \textbf{Empirical impact:} Our experimental results show that \alg matches or surpasses SFT + vanilla GRPO while using $\approx$ 20\% of the correct trace tokens. By reducing both the number of rollouts and the total optimization steps, \alg reduces the overall training compute by $\approx$ 75\% relative to vanilla GRPO. Additionally, we construct a slice of difficult problems and demonstrate that \alg, when trained on this set, achieves substantially higher accuracy compared to SFT+GRPO, corroborating its fundamental benefits in solving difficult problems that are otherwise unsolvable by SFT+GRPO. Finally, we provide an empirical study of how branched rollouts from expert hints (i.e., using only parts of the expert trace) densifies the reward signal. 
\end{itemize}

\begin{figure}                
  \centering                        
  \includegraphics[width=0.8\linewidth]{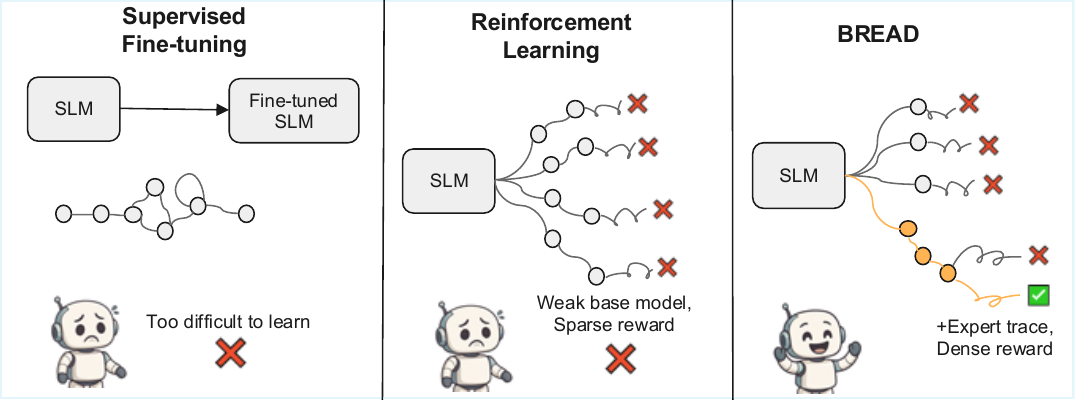}  
  \caption{\small{
  High-level overview of approaches.
  In existing training methods like supervised fine-tuning (left), high‑quality reasoning traces produced by LLMs are often too complex for SLMs to imitate, so they deliver little benefit and can even hurt SLM reasoning capability. Since the subsequent RL phase starts from this weak starting point, the two-stage SFT+RL procedure often fails. In reinforcement learning (center), when the initial policy generates incorrect traces, the rewards are sparse, causing slow or ineffective learning. We propose \alg
  (right), which uses part of an expert trace as an anchor, then generates additional rollouts that branch from its intermediate episodes. These branched trajectories provide denser, higher‑quality feedback, helping SLMs learn robust reasoning strategies. 
  Each dot represents a single episode, and the yellow trajectory is the expert trace.}}   
  \label{fig:fig1}   
\end{figure}

The rest of the paper is organized as follows: \Cref{sec:relate} dicusses the related work on language model reasoning and traditional RL methods. \Cref{sec:method} explains our algorithm \alg, which also contains the observations inspiring our algorithm design. \Cref{sec:experiments} presents and discusses our main experiment results to demonstrate the effectiveness of \alg. \Cref{sec:conclude} concludes the paper and discusses the limitation and future directions to improve in this domain.

\label{sec:method}
\section{Proposed Method: BREAD}

\begin{figure}               
  \centering       
  \vspace{-10pt}
  \includegraphics[width=0.95\linewidth]{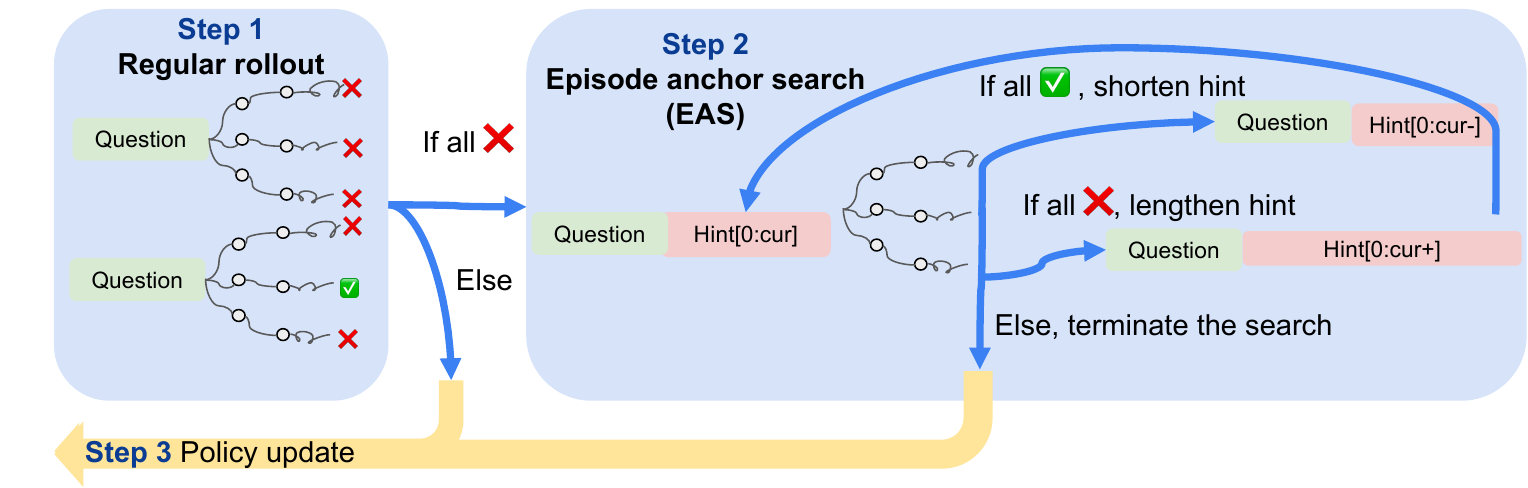}  
  \caption{\small{Workflow of \alg. (1) \textbf{Regular rollout}: Given a question $Q$, sample a group of rollouts. If the sampled rollouts contain correct reasoning trace, use this group of rollouts to do the \textbf{policy updates}. Otherwise, go to step (2) \textbf{Episode anchor search}: Starting with the whole expert trace (provided by the ground truth or from the correct responses generated by LLMs) as the search space for potential suitable hints, construct a hint using the first half of the expert trace, append it to the question $q$, and sample a group of rollouts.
  If all the rollouts are correct, shorten the hint to contain fewer episodes and repeat the process;
  if all rollouts are wrong, lengthen the hint; 
  otherwise, use the current rollouts to do the \textbf{policy update}.}}   
  \label{fig:algo}        
\end{figure}

In this work, we propose the \textbf{B}ranched \textbf{R}ollouts and \textbf{E}xpert \textbf{A}nchors for \textbf{D}ensified RL (\alg) algorithm.
We will first describe the algorithm at a high level, followed by a mathematical toy model example (\Cref{subsection:toy}), and the key observations (\Cref{subsec:observation_1}, \Cref{subsec:observation_2}) that support its design.

In \alg, for each question $q$ paired with the answer $a$, the workflow of \alg is shown in \Cref{fig:algo} and proceeds as follows:
\begin{enumerate}[nosep,leftmargin=*]
    \item \textbf{Regular rollout:} Sample a group of $G$ rollouts $\{o_i\}_{i=1}^G$.\vspace{4pt}
    \item \textbf{Episode anchor search:} If the success rate of the initial group is low (e.g.~lower than a threshold), do a binary search to find a short hint $\rho$ that contains the  ``Expert Anchor'' in the expert solution. ``Expert Anchor'' means the success rate of a new sampled output group $\{o_i^\prime\}_{i=1}^G$, resulting from the question appended with the hint from the expert trace $(q,\rho)$, is within a pre-defined range.\vspace{4pt}
    \item \textbf{Policy updates:} Optimize the policy via the following objective:
    {\footnotesize
    \begin{align}
        \mathcal{J}_{\text{BREAD}}&(\theta)=\mathbb{E}_{(q,\rho,a)\sim\mathcal{D},\{o_i\}_{i=1}^G\sim\pi_{\text{old}}(\cdot|(q,\rho))} \notag\\&\left[\frac{1}{G}\sum_{i=1}^G\frac{1}{|o_i|}\sum_{t=1}^{|o_i|}\left(\min\left(r_{i,t}(\theta)\hat{A}_{i,t},\text{clip}\left(r_{i,t}(\theta),1-\varepsilon,1+\varepsilon\right)\hat{A}_{i,t}\right)-\beta D_{\text{KL}}(\pi_\theta||\pi_{\text{ref}})\right)\right],
    \label{eq:BREAD_obj}
    \end{align}
    }
    where
    {\footnotesize
    \begin{align}
        r_{i,t}(\theta)=\frac{\pi_{\theta}(o_{i,t}|q,\rho,o_{i<t})}{\pi_{\text{old}}(o_{i,t}|q,\rho,o_{i<t})},\quad \hat{A}_{i,t}=\frac{r_i-\text{mean}(\{R_i\}_{i=1}^G)}{\text{std}(\{R_i\}_{i=1}^G)}
    \label{eq:adv}
    \end{align}
    }
\end{enumerate}
The details of the algorithm can be found in \Cref{alg:BREAD}.
Next, we will discuss the key observations that motivate the design of \alg.

\subsection{Theoretical Insights into SFT, Reinforcement Learning, and BREAD}\label{subsection:toy}
\noindent\textbf{Challenges of SFT:} It is known that a $L$ times deeper LLM can internally simulate $L$ chain-of-thought steps of a smaller LLM~\cite{saunshi2025reasoning}. This implies that the expert model can generate a dense reasoning trace which necessitates a $L\times $ longer simplified trace for the student model to digest via SFT. Otherwise, the student model may fail to learn from SFT as we will elaborate further below. Recent work \cite{li2025small} makes related empirical observation that SLMs can struggle to learn from strong reasoners.\smallskip

\noindent\textbf{Insights from compositionality:} Reasoning problems are inherently compositional hence we need a chain-of-thought \citep{wei2022chain} to solve such problems. Suppose the problem contains $T$ steps/subtasks to solve, each with successful completion probability of $\eps$ for the base student model. Then, we receive a final trajectory-level correctness reward with probability of $\eps^T$ assuming steps are independently completed. In contrast, by branching out from the expert trace at step $T-\tau$, BREAD only needs to complete the $\tau$ downstream steps, increasing the success to $\eps^{\tau}$. Notably, by increasing $\tau$ from $1$ to $T$, BREAD facilitates curriculum learning by tackling individual problem steps one at a time requiring only $\Omega(1/\eps)$ samples to attain success per step. Below we further formalize this discussion under a Markov chain model.

\smallskip
\noindent\textbf{Markov chain model:} Let us model the expert and student models through Markov chains as follows: Imagine a Markov chain over the States $\{0,1,2,\dots,K\}$. We envision that the student model is forbidden from implementing certain state transitions. Specifically, its connections are local and consecutive state transitions obey $|x_{t+1}-x_t|\le d$ \red{where the agent reaches state $x_t$ at time $t$, and $d$ denotes its maximum allowable jump distance of the small model}. We consider the learnability of the following navigation task by the student.

\begin{figure}[t!]
  \begin{subfigure}[b]{0.48\linewidth}
    \centering
    \includegraphics[width=0.85\linewidth]{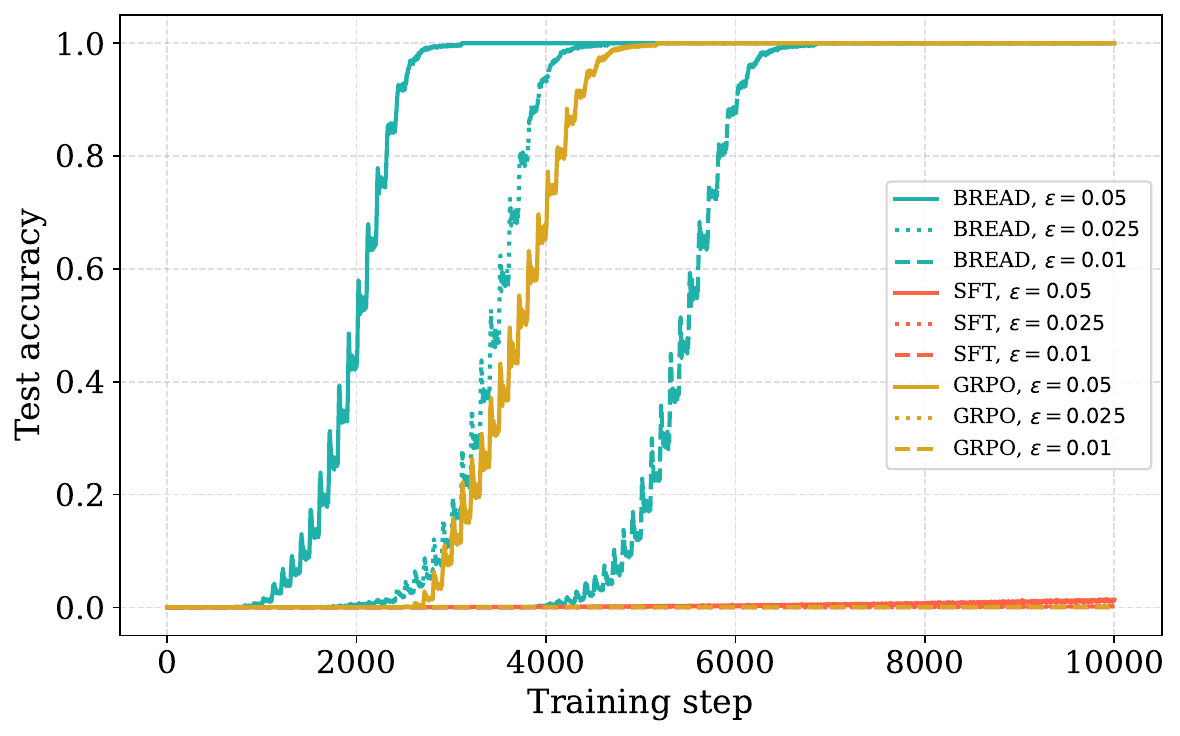}
    \vspace{-5pt}
    \caption{\small{Varying $\eps$ -- Initialization difficulty}}
    \label{fig:markov_diff_eps}
  \end{subfigure}
  ~
  \begin{subfigure}[b]{0.48\linewidth}
    \centering
    \includegraphics[width=0.85\linewidth]{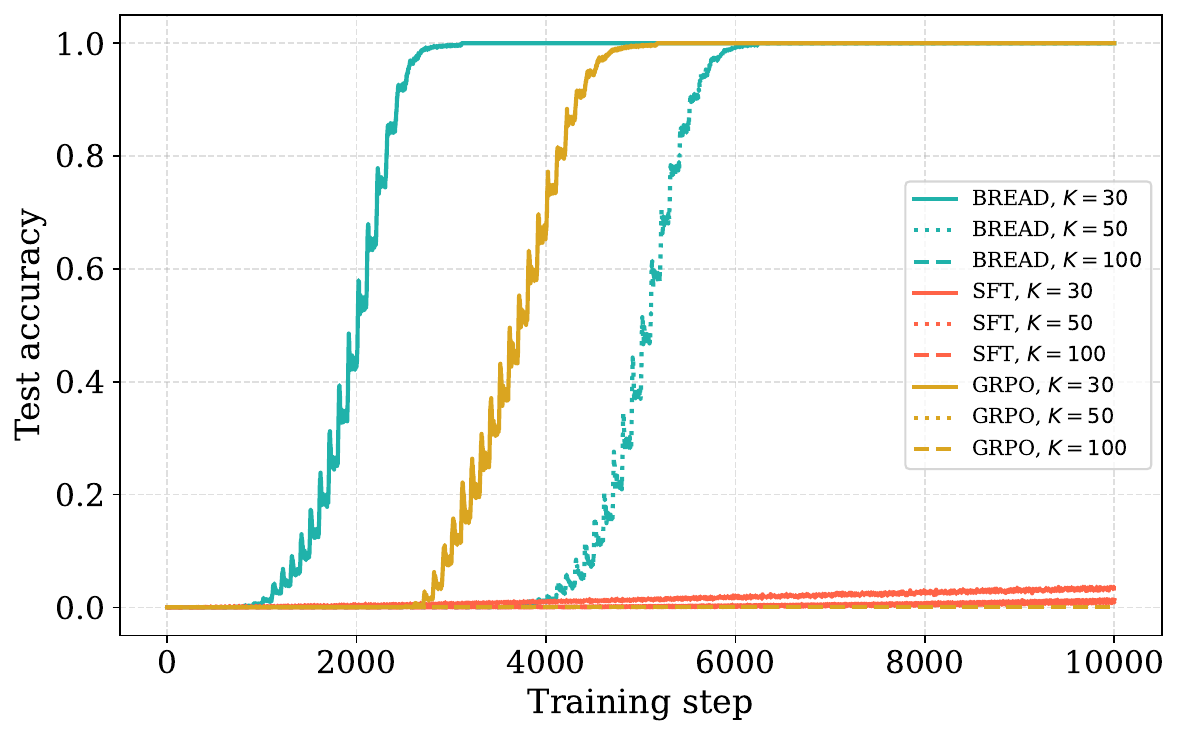}
    \vspace{-5pt}
    \caption{\small{Varying $K$ -- Number of states}}
    \label{fig:markov_diff_n}
  \end{subfigure}
  \vspace{-5pt}
  \caption{\small{We compare SFT, GRPO, and BREAD according to the Navigation Task described in Section \ref{subsection:toy}. $K$ is number of states in the Markov chain whereas $\eps$ is the probability of transition to \emph{non-favorable} states. Our toy model reveals settings where BREAD can succeed while SFT or GRPO completely fail.}} 
  \label{fig:theory}
\end{figure}

\hspace{10pt}\boxed{\textbf{Navigation Task:}~\text{Start a trace from State 0. Obtain a reward of 1 upon reaching State $K$.}}

Suppose the expert model solves the \emph{Navigation Task} by generating the trace ${\cal{T}}_e=[x_0=0,x_1,x_2,\dots,x_{T-1},x_T=K]$ where \red{$x_t\in\{1,\dots,K-1\}$ for $1\le t\le T-1$}. During SFT, the student Markov chain can only learn from ${\cal{T}}_e$ when the expert transitions $(x_t,x_{t+1})$ are learnable, i.e., $|x_t-x_{t+1}|\le d$. If no transition satisfies this, the small model cannot learn from SFT.

Without a good SFT-induced initialization, pure RL is known to suffer from sparse rewards \cite{vecerik2017leveraging}. In the context of the Navigation Task, once SFT fails, the student model can only improve through vanilla RL if it can generate a trace that successfully runs from $0$ to $K$ \red{to earn a nonzero reward}. In contrast, if the student employs BREAD, rather than starting from $0$, it can create a rollout starting from $x_{T-1}$. 
\red{Thus, it only needs to complete a much easier task starting from $x_{T-1}$, earning a reward by reaching the final state $x_{T}=K$. In the context of compositionality, generation of traces from $x_{t-1}$ to  $x_t$ corresponds to solving $t$'th subtask.}


To proceed, we have experimented with the Navigation Task controlled by the parameters \red{number of states $K+1$}, the student model's jump capacity $d$, and initialization quality $\eps$. Specifically, for each state $i$, the student model can jump to a neighbor $j\ne i$\red{, $|i-j|\leq d$} with probability at most $\eps$. Hence, larger the $\eps$, the more model struggles under trace length constraint. Figure \ref{fig:theory} demonstrates how optimization of SFT and GRPO can suffer as a function of $K$ and $\eps$ respectively. Importantly, we also display the performance of BREAD which exhibits a much more favorable performance. The reader is referred to the supplementary material for full experimental details. To proceed, we provide a concrete theoretical analysis of this discussion in the next section.




\subsection{Formal Guarantees under a Random Walk Model}\label{sec:formal guarantee}

\noindent\textbf{Setting and assumptions:} The expert trace starts at State 0 and ends at State $K$ by following the trajectory $x_0=0,x_1,\dots,x_T=K$. We assume that $c\cdot K/T\ge x_{t+1}-x_t\ge K/cT$ for some $c\ge 1$ and all $t$\red{, and that the interval  $[K/cT,cK/T]$ contains at least one positive integer}. In words, the expert model has a jump size of $\Omega(K/T)$. 

We assume that the student model is initialized as a \emph{symmetric random walk}. That means, it can only move to left or right neighbor (jump size $d=1$ \red{and $\eps=0.5$}) and it is allowed to visit negative states. Thus, optimizing the student model is same as choosing optimal neighbor transition probabilities for a random walk. In our context, the ideal transition rule is always moving to the right (from $i$ to $i+1$). 

Finally, we enforce a maximum trace length of $L$ and, during RL, the model only receives reward if the trace successfully reaches state $K$ within $L$ steps. With these, we have the following lemma that establishes that SFT on the expert trace followed by reinforcement learning cannot improve the student model with high probability unless $L\ge \Omega(K^2)$.
\begin{lemma} [Failure of SFT+RL]\label{lem fail} Suppose the student model is a Markov chain with $d=1$ initialized as a symmetric random walk. Suppose the expert trace $(x_t)_{t=1}^T$ always jumps two or more (i.e.~$K>c\cdot T$). Given the expert trace and at most $N\le e^{K^2/4L}$ traces generated during the RL phase, SFT and GRPO training have no impact on the student model with probability at least $2e^{-K^2/4L}$.
\end{lemma}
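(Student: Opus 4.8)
The plan is to handle the two phases separately: show that SFT is vacuous because no expert transition is expressible by the student, and then show that the RL phase almost surely never produces a rewarded trace, so GRPO never updates the policy.

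\textbf{SFT is vacuous.} Since $K > cT$, every expert increment obeys $x_{t+1}-x_t \ge K/(cT) > 1$, and being a positive integer it is therefore at least $2$; thus no transition $(x_t,x_{t+1})$ of $\mathcal{T}_e$ lies in the student's feasible set $\{(i,j):|i-j|\le d=1\}$. By the discussion in \Cref{subsection:toy}, the student cannot extract any signal from $\mathcal{T}_e$, so after SFT the policy is still the initial symmetric random walk $\pi_0$, and $\pi_{\text{ref}}=\pi_0$.

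\textbf{RL freezes unless a trace is rewarded.} I would argue by induction over GRPO iterations that the policy stays equal to $\pi_0$ as long as no sampled rollout has yet hit State $K$. At any iteration the $G$ rollouts carry rewards $R_i\in\{0,1\}$ (reward $1$ iff the trace reaches $K$ within $L$ steps); if all $R_i=0$ the reward batch is constant, so the normalized advantages $\hat A_{i,t}$ in \eqref{eq:adv} vanish and the policy-gradient part of $\nabla_\theta\mathcal{J}_{\text{BREAD}}$ is zero, while the KL term $-\beta D_{\text{KL}}(\pi_\theta\|\pi_{\text{ref}})$ has zero gradient at $\pi_\theta=\pi_{\text{ref}}$. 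Hence $\theta$ is unchanged and the induction closes. Consequently, on the event that none of the $N$ rollouts generated during the RL phase ever reaches $K$, SFT $+$ GRPO leaves the student identical to $\pi_0$.

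\textbf{Bounding the bad event.} It remains to control $\Pr[\text{some rollout reaches }K]$. For a single rollout $S_0=0,S_1,\dots,S_L$, a symmetric $\pm 1$ walk, the step size $d=1$ means hitting $K$ by time $L$ is exactly the event $\{\max_{t\le L}S_t\ge K\}$; by the reflection principle this has probability at most $2\Pr[S_L\ge K]$, and Hoeffding's inequality for a sum of $L$ independent $\{\pm1\}$ variables gives $\Pr[S_L\ge K]\le e^{-K^2/(2L)}$. A union bound over the $N\le e^{K^2/(4L)}$ rollouts yields $\Pr[\text{some rollout reaches }K]\le 2N e^{-K^2/(2L)}\le 2e^{-K^2/(4L)}$. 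Combined with the previous paragraph, SFT and GRPO have no impact on the student except on an event of probability at most $2e^{-K^2/(4L)}$ (equivalently, with probability at least $1-2e^{-K^2/(4L)}$), which is the claimed bound.

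\textbf{Main obstacle.} The conceptual steps are short; the only delicate point is the first-passage estimate and the bookkeeping that makes the $e^{K^2/(4L)}$ factor from the union bound absorbed by the $e^{-K^2/(2L)}$ Gaussian tail, which is what pins the constant $\tfrac14$ in the exponent and hence the $L=\Omega(K^2)$ threshold. A secondary subtlety is that GRPO is sequential and adaptive, but this is benign: the sampling law is frozen at $\pi_0$ until the first rewarded rollout, so a single union bound over all $N$ rollouts is legitimate.
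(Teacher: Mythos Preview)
Your proposal is correct and follows the same two-part structure as the paper's proof: SFT is vacuous because every expert jump exceeds the student's capacity $d=1$, and then the probability that any of the $N$ RL rollouts hits state $K$ is controlled by a random-walk tail bound plus a union bound. Your version is actually more explicit than the paper's—you derive the hitting-time estimate via reflection plus Hoeffding rather than citing it, and you spell out why all-zero reward batches freeze the GRPO update and why the sequential/adaptive nature of GRPO is benign—but the underlying argument is identical.
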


In words, this theorem states that the student is not able to learn from expert due to limited expressivity and it is not benefiting from the RL phase due to never obtaining nonzero reward. This failure happens unless the trace is long in the sense that $L\ge \Omega(K^2)$. Indeed, this is the time it takes for a symmetric random walk (initial student model) to hit State $K$.

\begin{wrapfigure}{r}{0.41\textwidth}
\vspace{-5pt}
\includegraphics[width=0.95\linewidth]{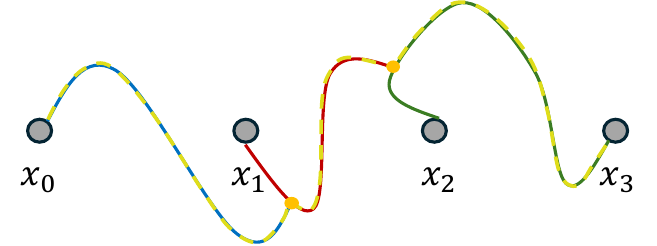}
  \vspace{-0pt}
  \caption{\small{Depiction of the trajectory stitching argument for $T=3$. Short subtraces (blue, red, green) are those generated at each round. The dashed yellow line are the final trace that maps $x_0$ to $x_T$.}}
  \label{fig:stitch}
  \vspace{-20pt}
\end{wrapfigure}
Our next result proves how BREAD can overcome this bottleneck and address this bottleneck while conforming to a shorter trace length. To this aim,  we examine a variation of BREAD that relies on a simple memorization of successful traces.

\noindent \textbf{Algorithm: BREAD variant.}  The algorithm starts from the right most end of the expert trajectory and incrementally learns the correct completion. Concretely, 

$\bullet$ During the first round, create IID rollouts of length at most $L$ starting from the prefix $(x_t)_{t=0}^{T-1}$. Once a reward of $1$ is achieved, record the generated suffix $S_1=(x'_t)_{t\ge T}$.

$\bullet$ At round $\tau \le T$, create IID rollouts of length at most $L$ starting from the prefix $(x_t)_{t=0}^{T-\tau}$. Terminate a rollout once it hits a state $s_{\text{hit}}$ in $S_{\tau-1}$ and complete the trace by replaying $S_{\tau-1}$ from $s_{\text{hit}}$. Declare success if the total length is at most $L$ and set $S_\tau$ to be the union of generated rollout and $S_{\tau-1}$ continued from $s_{\text{hit}}$.

This algorithm incrementally learns sub-trajectories and stitch them to eventually obtain a full trajectory $S_T$ that solves the entire problem. This is depicted in Figure \ref{fig:stitch}. During inference, the model simply replays the final memorized trace $S_T$ (yellow dashed curve in Figure \ref{fig:stitch}).

We have the following theorem that establish the sample complexity of this BREAD variant.
\begin{theorem}[Success of BREAD]\label{bread success} Suppose $L\ge 5c^2 K^2/T$ and assume BREAD generates $t$ rollouts at each round $1\le \tau\le T$. Then, with probability at least $1-Te^{-t}$, 
\red{BREAD succeeds in all $T$ rounds}
in creating $S_\tau$ while never violating the maximum trace length $L$. Thus, BREAD memorizes a successful student trace $S_T$ with length at most $L$ with the same probability.
\end{theorem}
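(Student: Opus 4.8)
The plan is to reduce each round of the BREAD variant to the first-passage time of a symmetric simple random walk (SRW), control that first-passage time with a reflection-principle estimate, and finish with a union bound over the $T$ rounds.

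\textbf{Reduction to a random walk.} I would first show, by induction on $\tau$, that on the event ``rounds $1,\dots,\tau-1$ all succeeded'' the memorized set $S_{\tau-1}$ is a block of consecutive integers $[\min S_{\tau-1},\,K]$ containing the anchor state $x_{T-\tau+1}$ (round $1$ being the special case $S_0=\{K\}$). The base case is immediate since $S_1$ is the range of a nearest-neighbor walk from $x_{T-1}$ to $K$; for the inductive step, ``replaying $S_{\tau-1}$ from $s_{\text{hit}}$'' -- read as the portion of $S_{\tau-1}$ after its last visit to $s_{\text{hit}}$ -- is a nearest-neighbor path from $s_{\text{hit}}$ to $K$, so it covers $[s_{\text{hit}},K]$ and has length at most $|S_{\tau-1}|$, and concatenating it with the new rollout keeps $S_\tau$ an interval with $\min S_\tau\le x_{T-\tau}$. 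Granting the invariant, a round-$\tau$ rollout from $x_{T-\tau}$ (take $x_{T-\tau}<\min S_{\tau-1}$, else $\ell_\tau=0$) first meets $S_{\tau-1}$ exactly when it reaches level $\min S_{\tau-1}$, so its length $\ell_\tau$ is distributed as $\tau^+_{m_\tau}$, the first hitting time of level $m_\tau$ by an SRW from $0$, where the integer $m_\tau=\min S_{\tau-1}-x_{T-\tau}$ satisfies $0<m_\tau\le x_{T-\tau+1}-x_{T-\tau}\le cK/T$.

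\textbf{Random-walk estimate, union bound, and length accounting.} The core estimate is $\Pr[\tau^+_m\le 5m^2]\ge 1-e^{-1}$ for every positive integer $m$. By the reflection principle $\Pr[\tau^+_m\le n]=\Pr[X_n\ge m]+\Pr[X_n\ge m+1]$, where $X_n$ is the SRW position after $n$ steps; at $n=5m^2$ a continuity-corrected central limit theorem gives $\Pr[X_{5m^2}\ge m]\to\overline\Phi(1/\sqrt5)\approx 0.327$, so the sum tends to $\approx 0.654>1-e^{-1}\approx 0.632$, and a Berry--Esseen bound for sums of symmetric $\pm1$ variables makes this uniform in $m$ above a small threshold, with the finitely many remaining $m$ handled by direct enumeration (e.g.\ $\Pr[\tau^+_1\le5]=11/16$). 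Since $L/T\ge 5c^2K^2/T^2\ge 5m_\tau^2$ and $5m_\tau^2$ is an integer, we get $\lfloor L/T\rfloor\ge 5m_\tau^2$, hence a single round-$\tau$ rollout hits $S_{\tau-1}$ within $\lfloor L/T\rfloor$ steps with probability at least $1-e^{-1}$ -- uniformly over the admissible shapes of $S_{\tau-1}$ -- so with $t$ i.i.d.\ rollouts the conditional failure probability of round $\tau$ is at most $e^{-t}$. Chaining over $\tau$ and invoking Bernoulli's inequality gives success in all $T$ rounds with probability at least $(1-e^{-t})^T\ge 1-Te^{-t}$. On that event, maintaining the inductive bound $|S_{\tau-1}|\le(\tau-1)\lfloor L/T\rfloor$ yields $|S_\tau|\le\ell_\tau+|S_{\tau-1}|\le\tau\lfloor L/T\rfloor$, so $|S_T|\le T\lfloor L/T\rfloor\le L$, which is the claimed length bound.

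\textbf{Main obstacle.} I expect the delicate step to be the random-walk estimate: the constant $5$ is essentially the smallest for which a single rollout succeeds with probability exceeding $1-e^{-1}$, so the gap between $2\,\overline\Phi(1/\sqrt5)$ and $1-e^{-1}$ is only a few percent, and a soft asymptotic CLT does not suffice -- one needs a quantitative (Berry--Esseen) version together with an explicit check of the small $m$. The interval bookkeeping in the reduction (that $S_{\tau-1}$ stays a contiguous block containing $x_{T-\tau+1}$, and that replaying from $s_{\text{hit}}$ adds length at most $|S_{\tau-1}|$) is elementary but must be done carefully, since it is precisely what licenses treating $\ell_\tau$ as an SRW first-passage time; the reflection identity, the union bound, and the telescoping length bound are otherwise standard.
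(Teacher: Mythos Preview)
Your proposal is correct and follows essentially the same inductive skeleton as the paper: bound each round by the first-passage time of a symmetric random walk from $x_{T-\tau}$ to (a state no farther than) $x_{T-\tau+1}$, show a single rollout succeeds within $L/T$ steps with probability at least $1-e^{-1}$, then take $t$ i.i.d.\ rollouts and union-bound over $T$ rounds. The one place the paper shortcuts your argument is the hitting-time estimate: rather than deriving it via reflection plus Berry--Esseen plus small-$m$ enumeration, the paper simply invokes the Feller-type bound $\Pr(\max_{1\le i\le n}X_i\ge K)\ge 1-0.8K/\sqrt{n}$ (their Lemma~2), which with $n=L/T$ and $K=cK/T$ gives failure probability $\le 0.8/\sqrt{5}<1/e$ directly and uniformly in $m$, so the ``main obstacle'' you flag disappears; your interval invariant and explicit length telescoping are more careful than the paper, which leaves those points informal.
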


A few remarks are in place: First, when $L\propto K^2/T$, BREAD succeeds by using a total of $\Omega(T\log(T))$ traces over $T$ rounds whereas SFT+RL from Lemma \ref{lem fail} fails with probability at least $1-2e^{-\Omega(T)}$ confirming our intuition on reward densification benefits. Secondly, suppose $T\propto K$ i.e.~the expert model has constant jump size. In this case, BREAD solves the problem in linear time as it requires $L\gtrsim O(K)$. In contrast, SFT+RL requires $L\gtrsim O(K^2)$ to solve the problem. In the next section, we discuss how these theory and insights are in line with the SFT and RL performance on real reasoning tasks with state-of-the-art models.

\section{Empirical Insights into BREAD}
\subsection{Limitations of SFT and Reinforcement Learning for SLMs} 
\label{subsec:observation_1}

\textbf{RL-only evaluations:} We begin by assessing how well RL works with SLMs in isolation. Experiments with vanilla GRPO~\cite{shao2024deepseekmath}, displayed in \Cref{fig:observation}, show that it merely sharpens the capabilities the model already exhibits. GRPO relies on sampling a mix of good and bad traces. But when a small base model fails to produce any high-quality trace—\textbf{\emph{nearly half of the queries}} in our evaluations in \Cref{fig:solve_none}—learning stalls due to lack of reward signals (\Cref{fig:solve_none2}). Related limitations are noted for RL with Verifiable Rewards of \cite{yue2025does}, which struggles to elicit fundamentally new reasoning patterns.



\textbf{SFT-only evaluations:} SFT can introduce new knowledge into the model.
However, as discussed earlier, traces generated by much stronger models can be too complex for SLMs to learn resulting in poor initialization for RL. To demonstrate this, we start from Qwen2.5-1.5B-Instruct and Qwen2.5-3B-Instruct base models and fine-tune them with 1000 difficult questions, paired with reasoning traces generated by Gemini Thinking Experimental~\cite{Gemini} and Deepseek-R1~\cite{guo2025deepseek}, following~\cite{snell2024scaling}.
These reasoning traces were previously shown to improve the reasoning capability of large models such as Qwen2.5-32B-Instruct and Qwen2.5-14B-Instruct by \cite{muennighoff2025s1}.
However, in our experiment results shown in \Cref{tab:sft_hurt}, when SFT was performed on small models, \emph{\textbf{accuracy actually decreases}}.
For example, accuracy dropped from 0.257 to 0.177 on the GPQA dataset when the 1.5B parameter model was fine-tuned on S1K traces, and dropped  even further to 0.121 when fine-tuned on the more verbose S1K-1.1 data.
This corroborates our central intuition: when traces used for SFT exceed an SLM's learning capacity, they can hurt the model performance rather than helping. 

These findings—that for small models, RL alone, or SFT + RL, are insufficient—motivate \alg which uses expert traces directly in the RL phase instead of relying on SFT to learn them first. 

\begin{table}[h]
\scriptsize
\centering
\begin{tabular}{lcccccc}
\toprule
        \multirow{2}{*}{\textbf{Dataset}} 
            & \multicolumn{3}{c}{\textbf{Qwen-1.5B-Instruct}} 
            & \multicolumn{3}{c}{\textbf{Qwen-3B-Instruct}} \\ 
        \cmidrule(lr){2-4}\cmidrule(l){5-7}
            & Base & + SFT (S1K) & + SFT (S1K-1.1) 
            & Base & + SFT (S1K) & + SFT (S1K-1.1) \\ 
MATH~\cite{hendrycks2021measuring} & 0.494 & 0.426 & 0.408 & 0.624 & 0.616 & 0.630 \\ 
GPQA~\cite{rein2024gpqa} & 0.258 & 0.177 & 0.121 & 0.369 & 0.247 & 0.288 \\ \bottomrule
\end{tabular}
\caption{\small{Accuracy of small models can decrease after SFT. The base SLMs are Qwen2.5-1.5B-Instruct and Qwen2.5-3B-Instruct and the datasets used for SFT are S1K and S1K-1.1 \citep{muennighoff2025s1}, with responses generated by Gemini Flash Thinking API~\cite{gemini20_flash_thinking} and DeepSeek-R1~\cite{guo2025deepseek} respectively.}}
\label{tab:sft_hurt}
\vspace{-15pt}
\end{table}

\begin{figure}
  \begin{subfigure}[b]{0.48\linewidth}
    \centering
    \includegraphics[width=0.8\linewidth]{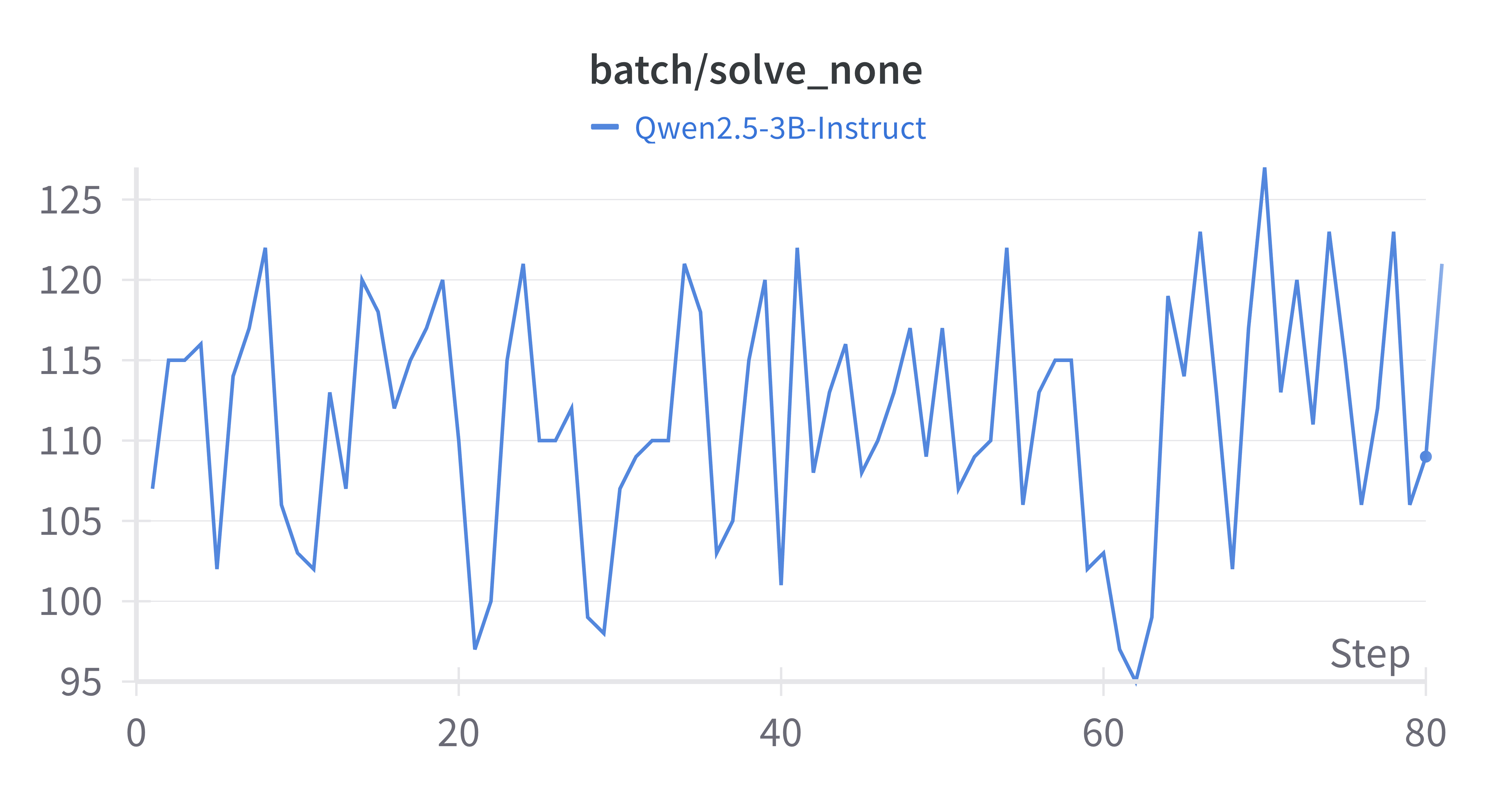}
    \vspace{-5pt}
    \caption{\small{Number of samples where all 8 rollouts fail}}
    \label{fig:solve_none}
  \end{subfigure}
  \begin{subfigure}[b]{0.48\linewidth}
    \centering
    \includegraphics[width=0.8\linewidth]{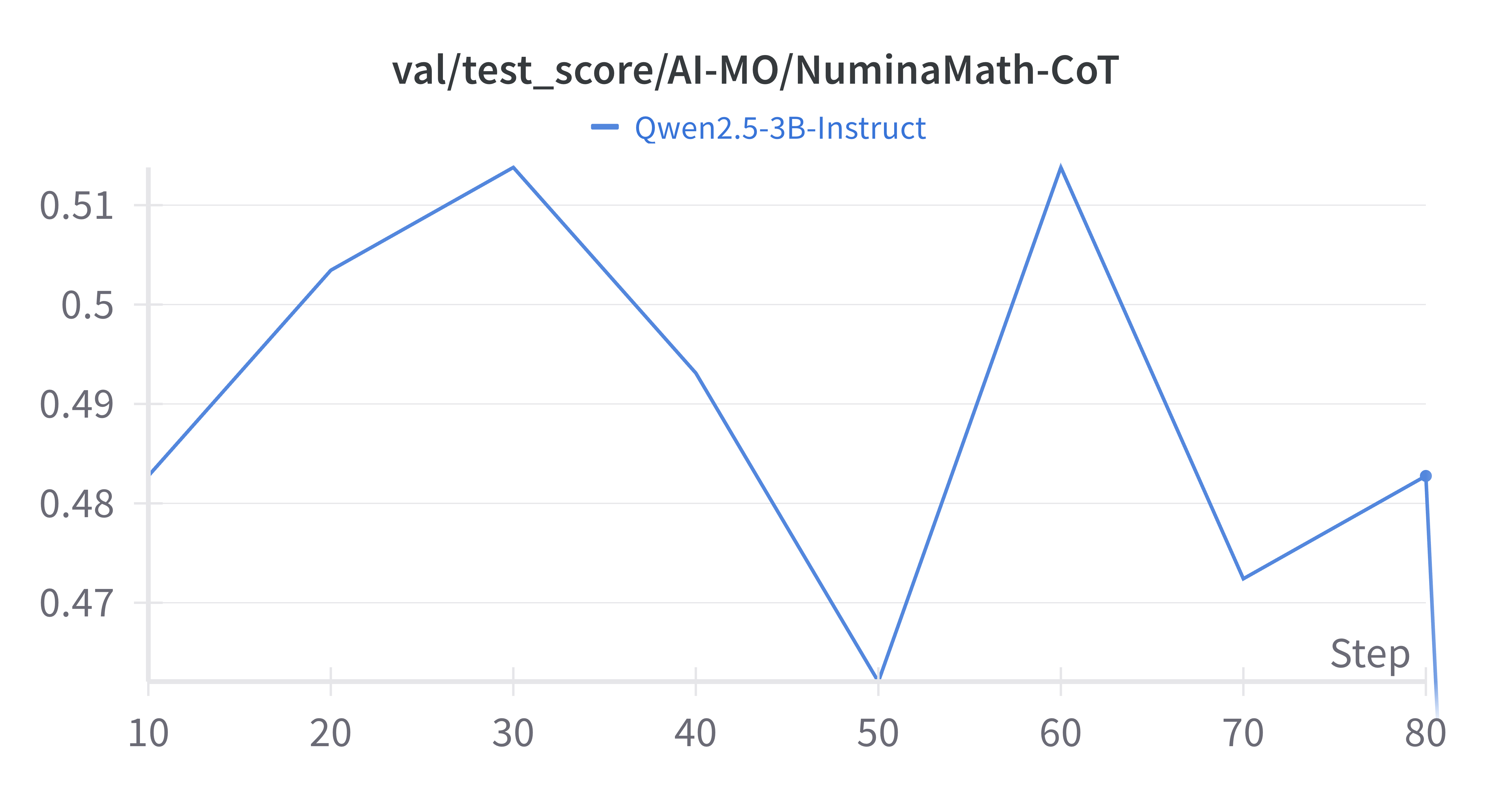}
    \vspace{-5pt}
    \caption{\small{Validation accuracy over training steps}}
    \label{fig:solve_none2}
  \end{subfigure}
  \vspace{-5pt}
  \caption{\small{Issues with vanilla GRPO on SLMs. Among 256 samples in a batch, for nearly half of them, the base model's 8 rollouts produce no correct trace. The absence of reward inhibits further performance gains and validation accuracy stalls.
  }}
  \label{fig:observation}
  \vspace{-10pt}
\end{figure}


\subsection{Why \alg: Expert Traces Provide Critical Guidance and Curriculum for RL}
\label{subsec:observation_2}
While \Cref{subsec:observation_1} shows that an SLM can struggle to imitate traces generated by powerful LLMs, we posit that \textbf{\emph{the SLM can still understand them well enough to extract useful information}}, namely by using part of the complex traces as hints to lower the problem difficulty.
To illustrate this, we define the ``hint ratio'' as the fraction of the expert trace used (in terms of number of episodes), as illustrated in \Cref{fig:hint_difficulty}.
A higher hint ratio means more guidance for the model.
In \Cref{fig:correctness_hintratio_NuminaMath-CoT}, we plot the model accuracy for different hint ratios, where the hint is appended to the inference query.
We can see that our intuition is correct: even providing part of the hint in a very simple way (by appending to the inference query) helps improve accuracy. 
In contrast, \Cref{fig:training_curve} shows that SFT only on the same traces gets very limited improvement and can even hurt. 
\alg is motivated by this observation and uses expert hints in a more sophisticated way, by integrating them directly into the RL create denser rewards.

\begin{figure}                
  \centering                        
    \begin{subfigure}[b]{0.45\linewidth}
  \includegraphics[width=0.8\linewidth]{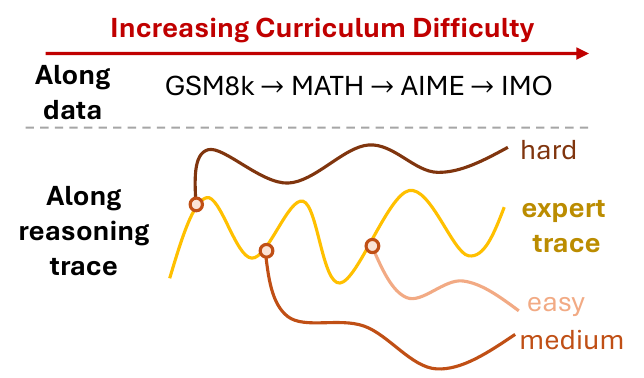}  
  \vspace{-5pt}
  \caption{\small{Illustration of hint ratio (how much of the expert trace is provided) and curriculum learning.} } 
  \label{fig:hint_difficulty}        
  \end{subfigure}
  ~
    \begin{subfigure}[b]{0.45\linewidth}
  \includegraphics[width=0.75\linewidth]{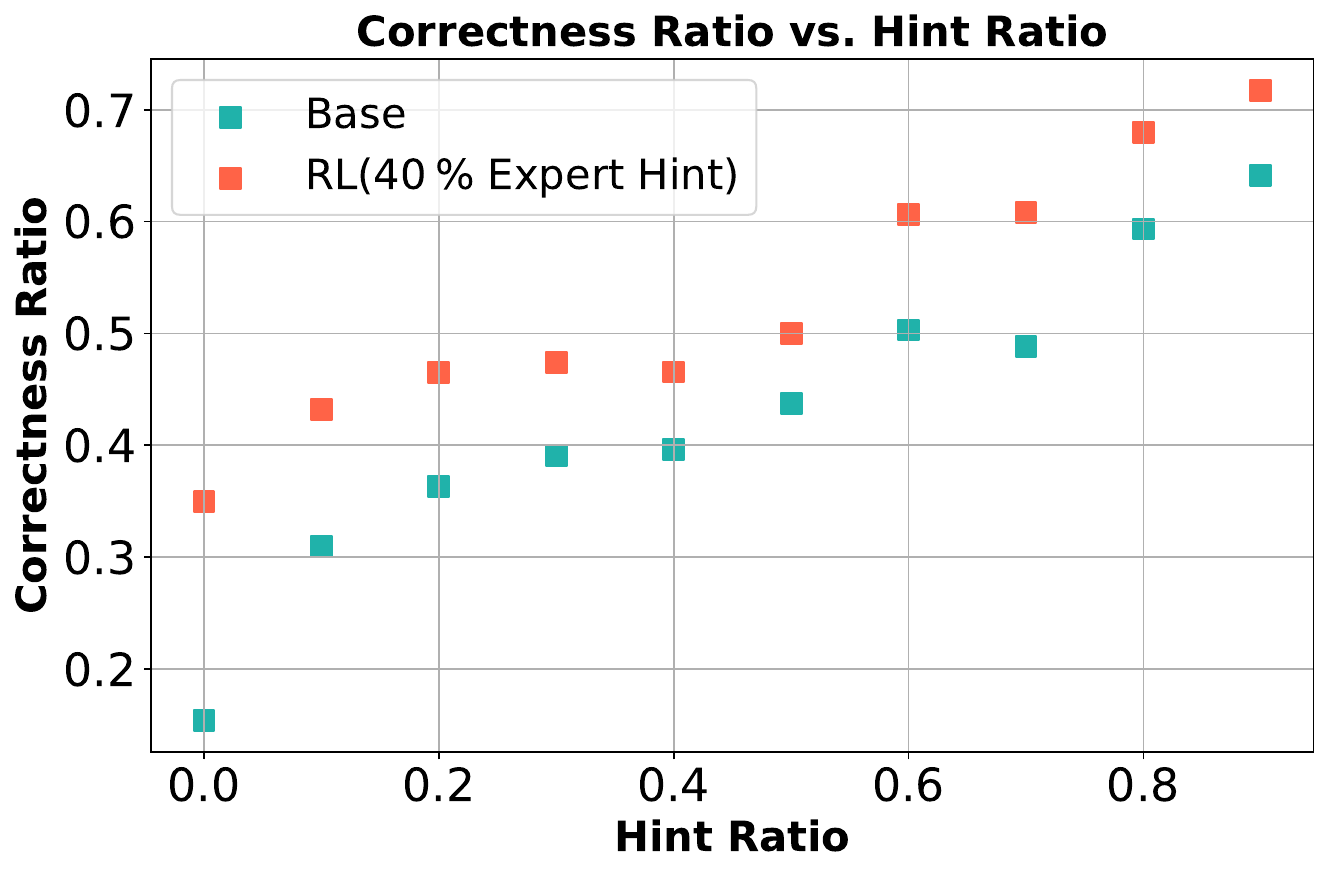}  
  \vspace{-5pt}
  \caption{\small{Accuracy for different ratio of hints, appended to the inference query.}}   
  \label{fig:correctness_hintratio_NuminaMath-CoT}        
  \end{subfigure}
  \vspace{-5pt}
  \caption{\small{(a) Math benchmarks may show difficulty tiers, but ranking individual problems for curriculum training is still challenging. Instead, \alg can adaptively adjust the difficulty by automatically choosing the branching point. For easier questions, it would provide no or shorter hints (smaller hint ratio). 
  (b) Model accuracy increases as longer hint is appended to the query. Dataset is the first 200 questions of NuminaMath-CoT. The blue dots are the base model (Qwen2.5-3B-Instruct), and the orange dots are RL finetuned using traces containing $40\%$ hints.} 
  }\label{fig:curriculum}
  \vspace{-10pt}
\end{figure}

We can also view \alg through the lens of curriculum learning.
As shown in \Cref{fig:hint_difficulty}, as we branch out from the expert trace earlier, solving the problem becomes more challenging. The Episode Anchor Search (EAS) step in \alg automatically adjusts the branching point, producing a self‑paced curriculum. In this sense, \alg serves as a generalized curriculum‑learning framework which (1) allows us to utilize SFT data within RL, (2) adapts the optimization to the problem difficulty, and (3) generates dense rewards by creating a curriculum along the reasoning trace. By branching along the expert trace at adaptive cut-points, \alg automatically surfaces the most informative training samples and tunes their difficulty on the fly. 
This relieves us from hand-crafting a data-level schedule and lets the model discover its own curriculum. 

A final surprising observation from \Cref{fig:correctness_hintratio_NuminaMath-CoT} is that conditioning RL training on questions with partial hints not only improves performance on hinted queries (e.g., orange dots with hint ratio 0.3 and above), but also improves accuracy on questions without any hints (i.e., orange dot with hint ratio of 0). This highlights the model's ability to generalize from partial short traces to full long traces.

\begin{algorithm}
\small
\caption{\textbf{BREAD}: GRPO with Branched Rollouts and Expert Anchors\\(see \Cref{alg:BREAD_full} in Appendix for full version.)}
\begin{algorithmic}[1]
\Require Dataset $\mathcal{D}$, current policy $\pi_\theta$, sampling number $G$, a list of keywords for splitting episodes $[w_0,w_2,...,w_J]$
\Procedure{\texttt{\alg}}{$\mathcal{D},\pi_\theta,G,[w_0,w_2,...,w_J]$}
    \For{$\text{step}=1,2,...,N$}
    \State Sample a batch $\mathcal{D}_b$ from $\mathcal{D}$
    \State Update the old policy model $\pi_{\text{old}}\gets \pi_\theta$
    \For{each question and expert solution pair $(Q,S)$ in $\mathcal{D}_b$}
    \State $[R_1,R_2,...,R_G], \rho\gets$ \texttt{EAS}($Q,S,\pi_\theta,G,[w_0,w_2,...,w_J]$) \Comment{Episode Anchor Search (\texttt{EAS)}}
    \State Add the $(Q,\rho),R_i$ to the buffer.
    \EndFor
    \State For each $(Q,\rho),o_i$ in the buffer, compute $\hat{A}_{i,t}$ for the $t$-th token of $o_i$ \Comment{\Cref{eq:adv}}
    \EndFor
    \For{$\text{iteration}=1,2,...,T$}
    \State Update the policy model $\pi_\theta$ by maxmizing the \alg objective \Comment{\Cref{eq:BREAD_obj}}
    \EndFor
    \State \textbf{return} $\pi_\theta$
\EndProcedure
\Procedure{\texttt{EAS}}{$Q,S,\pi,G,[w_0,w_2,...,w_J]$}
    \State $[R_1,R_2,...,R_G] \gets \pi(Q)$ \Comment{Sample $I$ responses for question $Q$ with current policy $\pi$}
    \State $p_{\text{correct}} \gets \mathtt{compute\_correct\_probability}([R_1,R_2,...,R_G],S)$ 
    \State \Comment{compute correct probability based on responses and the gold solution}
    \If{$0< p_{\text{correct}}$}
        \State \textbf{return} $[R_1,R_2,...,R_G], NA$
    \Else
        \State $[e_0,e_1,...,e_K] \gets\mathtt{split\_episodes}(S,[w_0,w_2,...,w_J])$
        \State \textbf{return} $\mathtt{BINARY\_SEARCH\_AND\_GENERATE}(Q,S,\pi,G,[e_1,e_2,...,e_K],0,K)$ 
    \EndIf
\EndProcedure
\end{algorithmic}
\label{alg:BREAD}
\end{algorithm}

\section{Experiments}

\label{sec:experiments}
\textbf{Baseline algorithms.} We evaluate the effectiveness of our method \alg (as described in \Cref{alg:BREAD}) on mathematical tasks, which can be easily adapted to other reasoning tasks such as coding, commonsense reasoning, etc. 
We compare with the following baselines:
\begin{itemize}[nosep,leftmargin=*]
\item \texttt{GRPO}: Standard GRPO.\vspace{2pt}
\item \texttt{SFT}: Standard supervised fine-tuning on the full training set. We denote SFT on various data splits as \texttt{SFT(X)}, e.g. \texttt{SFT(full)} for the whole dataset, \texttt{SFT(difficult)} for the hardest subset, \texttt{SFT(random)} for a size-matched random subset, and \texttt{SFT(selected)} for the EAS-chosen subset. The precise discussion is detailed with the experiments results.\vspace{2pt}
\item \texttt{SFT + GRPO}: This means \texttt{SFT} is run followed by \texttt{GRPO}. In other words, \texttt{GRPO} continues from the final checkpoint of the same \texttt{SFT} run. We denote GRPO fine-tuning initialized from an SFT model trained on data split X as \texttt{SFT(X) + GRPO}.\vspace{2pt}

\item \texttt{GRPO w/ Expert Trace}: During \texttt{GRPO} training, the entire expert trace is injected as an additional rollout for all queries. This is a strong baseline as it contains the expert. Details in Appendix \ref{app:golden}.
\end{itemize}

\textbf{Training settings.}
We adopt the verl framework \cite{sheng2024hybridflow} for training. 
We utilize the Adam optimizer \cite{kingma2014adam} with a constant learning rate of $1\times 10^{-6}$. For rollout, the prompt batch size is 256 and we sample 8 responses for each prompt. For training, the mini-batch size is 64.
To find the appropriate branching point during the Episode Anchor Search (EAS)
step of \alg (\Cref{sec:method}), we first
split the expert trace into sentences (on easier datasets like MATH~\cite{hendrycks2021measuring}, where the expert traces are generally short) or into paragraphs 
(on harder datasets like NuminaMth-CoT \cite{numina_math_datasets} and OpenR1-Math-220K \cite{openr1}). Then, we aggregate the split partitions into $K=10$ episodes evenly, and proceed with binary search. See details in \Cref{app:episode}. 


\subsection{Main Results}
\label{subsec:main_results}
\textbf{\alg outperforms all baselines in terms of test accuracy.}
\Cref{fig:training_curve} 
plots the training curves of all methods on the NuminaMath-CoT benchmark, starting from the Qwen-2.5-3B-Instruct base model.
In \Cref{fig:train_steps}, we visualize the test accuracy against training steps for \alg and the baseline methods. \alg outperforms all the baselines.
Let us discuss each baseline in turn.
\alg improves final accuracy by more than 15\% over vanilla \texttt{GRPO}.
SFT can offer a stronger starting point for RL, which we can see by comparing the \texttt{SFT} and \texttt{SFT+GRPO} curves, but \alg is still better.
(Note that training SFT for more iterations does not further improve performance, see \Cref{app:exp_sft}.) 
\texttt{SFT(Difficult)} represents SFT trained on the hardest questions from NuminaMath-CoT, obtained by sorting samples by solution length. Intuitively, problems that require longer solutions are typically more complex and involve more reasoning steps. Comparing the \texttt{SFT(Difficult)} and \texttt{SFT(Difficult)+GRPO} curves, we can see that SFT with too complex expert traces can even hurt the performance of SLM, which further hurts the later GRPO stage.
Finally, from the \texttt{GRPO w/ Expert trace} curve, we see that adding the expert trace as an extra rollout during GRPO mitigates sparse rewards and shows noticeable performance improvement, but it is still worse than \alg. 
We suspect this is because there is a distribution gap between small and large model's reasoning traces, which makes SLM imitation of large models hard, while \alg can reduce the gap during learning by letting the SLM figure out the reasoning steps by itself more. See further discussion below on \textbf{``hard questions''} for elaboration. We also provide more results among different base models and datasets in \Cref{tab:sample} and \Cref{app:add_exp}.

\textbf{\alg reduces training time.}
\alg is also markedly more training-efficient. It can reach the accuracy of the best baseline, \texttt{SFT+GRPO}, in just 25\% of the training steps (\Cref{fig:train_steps}), translating to roughly a 75\% reduction in total compute FLOPs (Figure \ref{fig:train_flops}). 
We estimated the FLOPs based on the common cost evaluation method used in recent scaling-law studies~\cite{snell2024scaling,hoffmann2022training,sardana2023beyond}, by counting a forward pass as $2ND$ and a backward pass as $4ND$. Here $N$ is the number of model
parameters and $D$ is the total token count processed in that pass. 
For calculation details, see \Cref{app:flops}.
Note that we estimate $D$ as the average length of all expert traces in the training set, but according our measurements, \alg's actual generation length is always shorter.
Therefore for \alg, its actual token count, and thus its FLOP cost, is even lower than the estimated values reported \Cref{fig:train_flops}. We also discuss the potential of reducing the number of rollout needed to reduce training cost in \Cref{app:exp_rollout}.

\begin{figure}[htbp]                
  \centering                        
      \begin{subfigure}[b]{0.48\linewidth}
    \centering
  \includegraphics[width=0.95\linewidth]{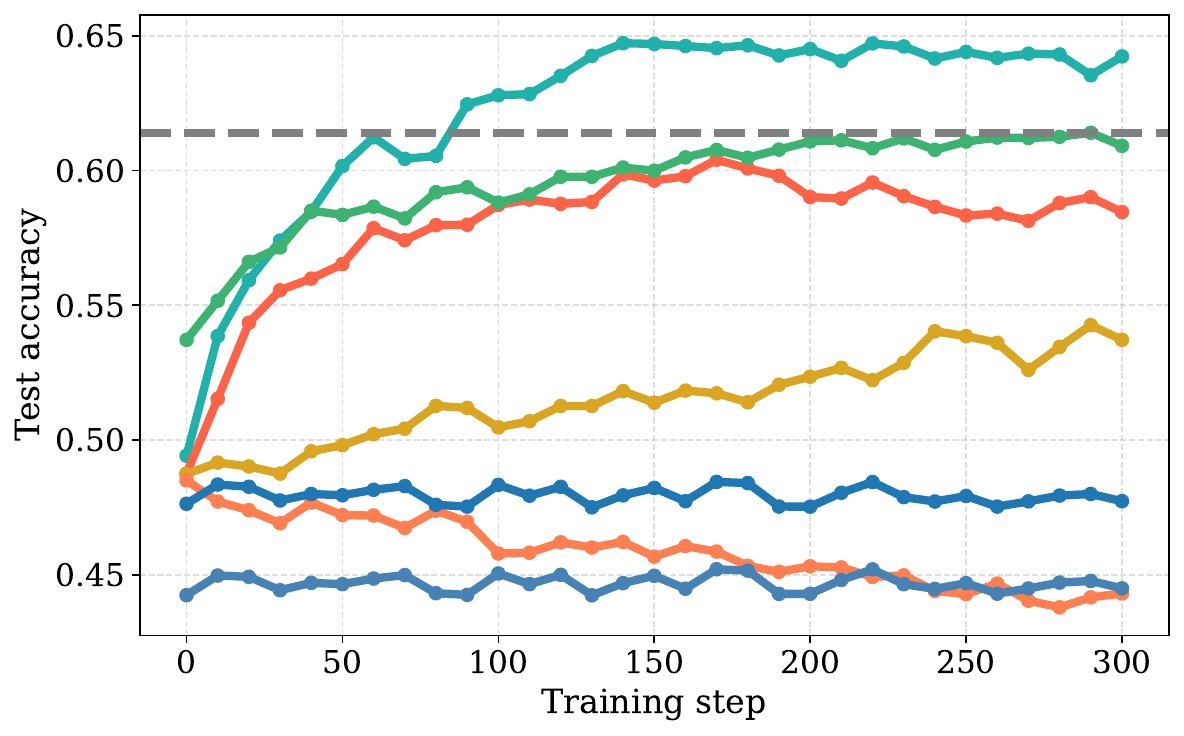}  
  \vspace{-5pt}
  \caption{\small{Test accuracy over training steps}}\label{fig:train_steps}
  \end{subfigure}
    \begin{subfigure}[b]{0.48\linewidth}
    \centering
    \includegraphics[width=0.95\linewidth]{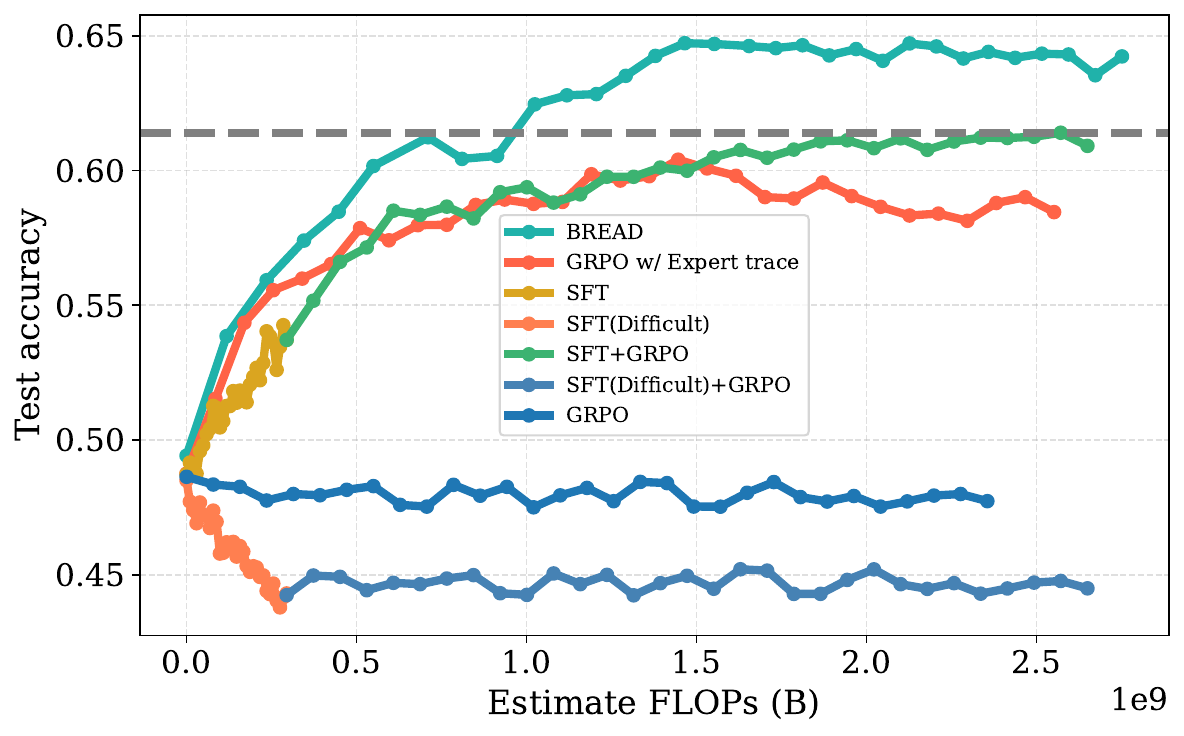}
    \vspace{-5pt}
    \caption{\small{Test accuracy vs estimated FLOPs}}
    \label{fig:train_flops}
  \end{subfigure}
  \vspace{-5pt}
  \caption{\small{Test accuracy over training steps (left) / FLOPs (right). \alg, which adaptively uses hints from expert traces during GRPO, significantly improves SLM reasoning ability compared to all baselines. The gray dashed line (max accuracy of the best baseline) demonstrates that \alg can speed up the convergence speed by about 3$\times$. Both figures share the same legend.}}   
  \label{fig:training_curve}        
\end{figure}

\textbf{\alg succeeds on the hard questions, where other baselines fail.}
To understand the gains of \alg, we train and evaluate each method exclusively on very difficult problems. 
The goal is to investigate whether expert hints in \alg can help SLMs learn new information particularly from these hard questions.
We conduct the experiment on the NuminaMath-CoT dataset and the Qwen2.5-3B-Instruct as the base model. To build the hard dataset, we first run ordinary SFT and 
from the training dataset, we select the 500 questions for which 
three independent generations produce no correct trace ($\text{pass@}3=0$). These tasks are unsolved by the small model, and their expert traces proved too complex for SFT to learn. The 500 samples were split into 80/20 train/test subsets. We then again used Qwen2.5-3B-Instruct as the base model and trained each method on this hard subset. The results are shown in \Cref{fig:hard_training_curve}. All baselines show little or no improvement on the test set after training on the hard questions. In contrast, \alg achieves a clear upward performance with continued training, demonstrating that its partial expert guidance and branched rollout strategy can provide learnable information, even when standard \texttt{SFT} and vanilla \texttt{GRPO} fail. We argue that \alg succeeds because it adaptively reduce problem difficulty and densifies the rewards. As shown in \Cref{fig:rollout_solve_none}, \alg sharply lowers the solve-none ratio, getting more informative samples and richer feedback. This enables \alg to learn effectively even from very hard questions and complex reasoning traces.


\begin{figure}
  \begin{subfigure}[b]{0.48\linewidth}
    \centering
    \includegraphics[width=0.95\linewidth]{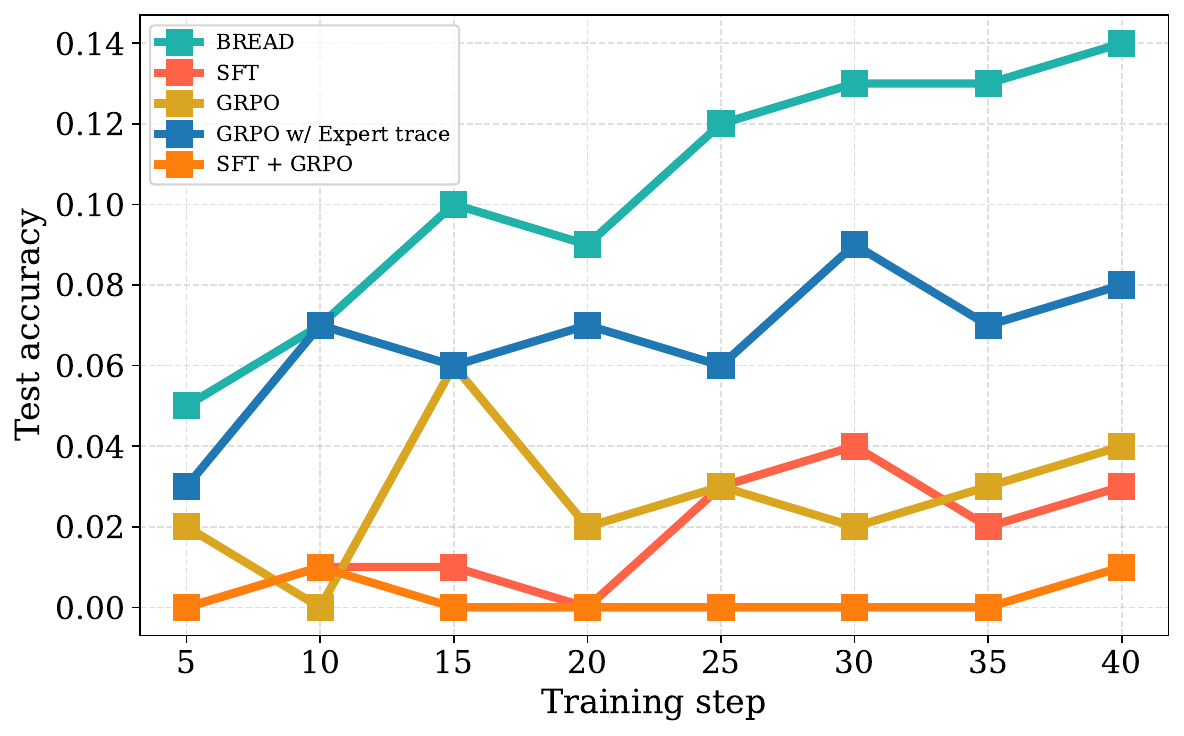}
    \vspace{-5pt}
    \caption{\small{Test accuracy over training steps}}
    \label{fig:hard_training_curve}
  \end{subfigure}
  ~
  \begin{subfigure}[b]{0.48\linewidth}
    \centering
    \includegraphics[width=0.95\linewidth]{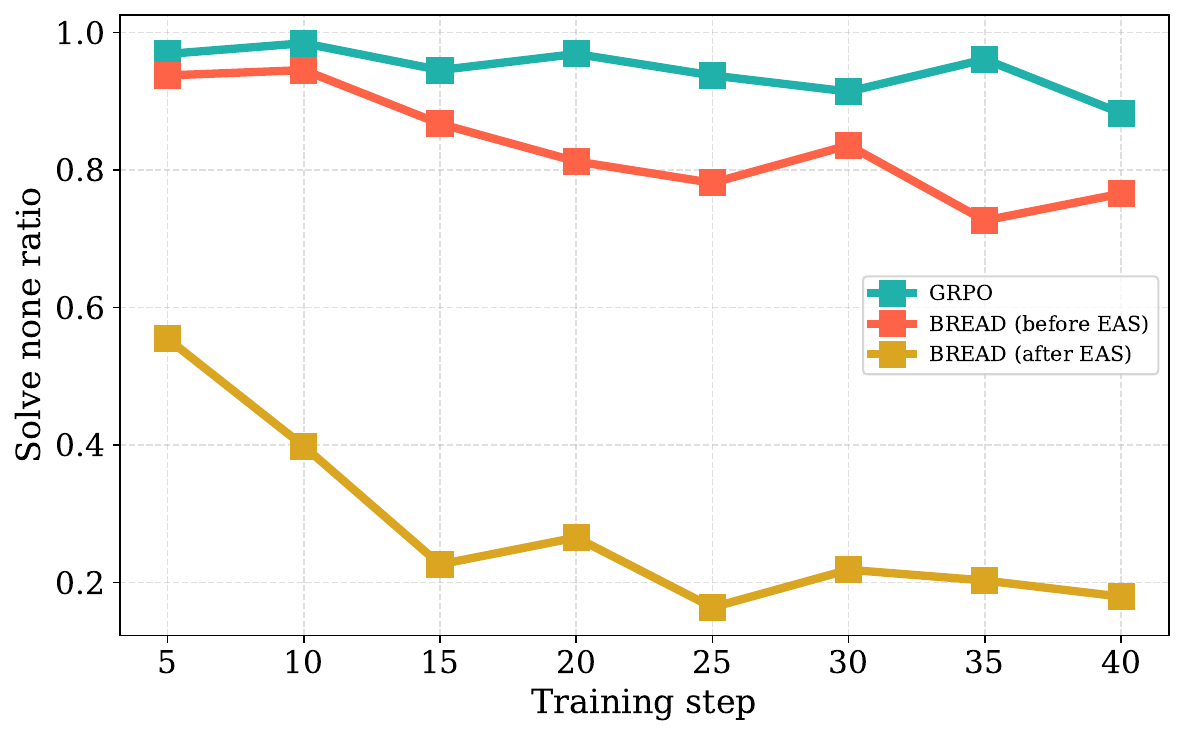}
    \vspace{-5pt}
    \caption{\small{Ratio of questions whose rollouts are all wrong}}
    \label{fig:rollout_solve_none}
  \end{subfigure}
\vspace{-5pt}
  \caption{\small{(a) Test accuracy on very hard questions over training steps. 
  \alg outperforms other baselines significantly while the traditional methods (\texttt{SFT}, \texttt{GRPO}, \texttt{SFT+GRPO}) learn little. (b) Proportion of test questions for which every rollout fails (solve-none ratio). In vanilla GRPO, the ratio stays persistently high, signalling that training stalls under sparse rewards. In \alg, the solve-none ratio for regular rollout starts similarly high, but EAS injects expert traces and densifies the reward. This rate continuously decreases, confirming that the model is learning.}} \vspace{-5pt}
  \label{fig:train}
  \vspace{-10pt}
\end{figure}

\textbf{\alg improves sample efficiency during training.} 
SLMs distilled via large-scale SFT can achieve strong reasoning capability, such as DeepSeek-R1-Distill \cite{guo2025deepseek}. 
However, the distillation pipeline is prohibitively expensive. 
For example, this model is trained with 800k samples from both reasoning and non-reasoning domains curated with DeepSeek-R1, containing 671 billion parameters. A single forward pass through such a huge model already costs more FLOPs than 25 RL training steps with 8 rollouts. The pipeline also needs expensive sample filtering and trace post-processing, like the heavyweight data-collection procedure in~\cite{muennighoff2025s1,li2025small}. What makes things worse is that, as illustrated in~\Cref{tab:sft_hurt}, 
each target model requires training with different samples, multiplying the cost. 

In contrast, \alg is far more sample-efficient, achieving comparable gains with a small fraction of the expert trace and without any heavyweight sample selection stage.
To show this, we created two trace-budget–matched baselines, \texttt{SFT(selected)} and \texttt{SFT(random)}. 
For fair comparison, we first record the expert traces actually requested by \alg via Episode Anchor Search (\texttt{EAS}). On NuminaMath-CoT, this corresponded to 36.7\% of samples, and on the easier Math~\cite{hendrycks2021measuring} dataset the fraction falls to 19.1\% (during 300 training steps of Qwen-2.5-3B-Instruct). We then supervised fine-tune base models with the same number of traces
To create \texttt{SFT(selected)}, we select the exact subset chosen by \alg. To create \texttt{SFT(random)}, we use an equally sized randomly picked subset.
Finally, we created \texttt{SFT(full)}, which uses all the expert traces, take a high cost. 
Each of the \texttt{SFT(x)} phases was followed by a GRPO phase.
As the results in \Cref{tab:sample} show, \alg outperforms nearly all the \texttt{SFT(X)+GRPO} baselines in terms of accuracy, and can even approach the performance of the expensive DeepSeek-R1-Distill model. Notably, while DeepSeek-R1-Distill gains from vast and diverse training data, \alg achieves nearly comparable accuracy without requiring this data. 
We also observe that small models do not necessarily benefit from SFT with expert traces, as evidenced by the Qwen-2.5-1.5B-Instruct run on NuminaMath-CoT, where \texttt{SFT(full)+GRPO} has relatively low accuracy.
Also, which samples are most useful for SFT is uncertain: on the MATH dataset, \texttt{SFT(selected)+GRPO} has higher accuracy than \texttt{SFT(random)+GRPO} with Qwen-3B-Instruct, but it is the opposite for Qwen-1.5B-Instruct.

\begin{table}[h]
    \centering
    \resizebox{1\textwidth}{!}{%
    \begin{tabular}{llcccccc}
        \toprule
        \textbf{Model} & \textbf{Dataset} 
        & \textbf{GRPO} 
        & \textbf{SFT(full)+GRPO} 
        & \textbf{SFT(random)+GRPO} 
        & \textbf{SFT(selected)+GRPO} 
        & \textbf{\alg} 
        & \textbf{DeepSeek-Distill} \\
        \midrule
        Qwen-1.5B-Instruct & MATH & 0.590 & 0.774 & 0.712 & 0.706 & 0.788 & 0.818 \\
        Qwen-3B-Instruct   & MATH          & 0.681 & 0.846 &  0.735 & 0.794 & 0.843 & / \\
        Qwen-1.5B-Instruct & NuminaMath-CoT  & 0.347 & 0.242 & 0.356 &0.234& 0.361 & 0.368 \\
        Qwen-3B-Instruct   & NuminaMath-CoT& 0.475 & 0.537 & 0.519 & 0.502 & 0.647 & / \\
        \bottomrule
    \end{tabular}%
    }
    \caption{
    \small{\alg outperforms other baselines and nearly reaches the accuracy of DeepSeek-R1-Distill, which has the benefit of vast training data. There is no DeepSeek-R1-Distill for Qwen-3B provided by \cite{guo2025deepseek}, so its cells are left blank.}}
    \label{tab:sample}
    \vspace{-10pt}
\end{table}

\section{Related Work}

\label{sec:relate}
\textbf{Supervised fine-tuning (SFT) and Reinforcement Learning (RL).} Recently, there is a debate about whether SFT or RL can truly improve the reasoning ability of language models (LMs). With the emergence of \cite{shao2024deepseekmath,guo2025deepseek,team2025kimi}, more and more reinforcement finetuned reasoning models demonstrate the importance of RL. in reasoning tasks. \cite{chu2025sft} prove that RL can enhance the reasoning ability of LMs while SFT can only force LMs to memorize knowledge. However, \cite{yue2025does} argues that the base model already has the reasoning ability while Reinforcement Learning with Verifiable Rewards (RLVR) barely increases the probability of correct reasoning trace. Furthermore, distillation works such as \cite{guo2025deepseek,muennighoff2025s1} prove that SFT can help SLMs acquire reasoning capability comparable to the expert/teacher models. While the current popular pipeline to train a reasoning LM is SFT followed by RL, we argue the need for stronger integration of SFT and RL because, for hard questions, the expert solution might not be suitable for base models to learn while RL can struggle to discover even a single correct trace.

\textbf{Efficient RL and Reasoning.} While reasoning LMs become more powerful, computation is more demanding during the training and deployment procedure. Therefore, researchers recently paid more attention to efficient reasoning in both directions. Pivotal Token Search (PTS) \cite{abdin2024phi} accelerates the Direct Preference Optimization (DPO) \cite{rafailov2023direct} training by identifying tokens in a language model generation that significantly impact the probability of success for the reasoning task. Following the memory efficiency but training time inefficiency introduced by GRPO \cite{shao2024deepseekmath,guo2025deepseek}, many follow-up works improve the training convergence speed, including DAPO \cite{yu2025dapo}, CPPO \cite{lin2025cppo}, PODS \cite{xu2025not} and DUMP \cite{wang2025dump}. \cite{team2025kimi,zhang2025makingsmalllanguagemodels,aggarwal2025l1} saves the token usage during inference by training with one or multiple levels of length penalty. Meanwhile, meta reinforcement fine-tuning (MRT) \cite{qu2025optimizing} makes inference token wasted less in meaningless reasoning steps by making the success rate steadily increase with the number of reasoning episodes. There are also orthogonal approaches to efficiency e.g.~by forming cascades of small and large language models \cite{chen2023frugalgpt,zhang2024efficient,guptalanguage}. Compared with these previous works, we not only decrease the supervised signals during training by only introducing an expert solution when the current model cannot solve the current task with a probability relatively high enough, but guide the model with the expert hint to increase the RL training efficiency. which can provide a denser reward for speeding up the RL training procedure.

\textbf{Related classical RL methods.} DAgger \cite{ross2011reduction} intermittently injects expert actions to correct agent behavior. Instead, \alg adaptively inserts expert hints only when the agent fails and allows the model to complete the remaining of the reasoning trace, allowing for a natural curriculum. Go-Explore \cite{ecoffet2021first} trains an agent that can solve all Atari games by branching from intermediate states, while \alg branches out from expert traces. Methods like Hindsight Experience Replay (HER) \cite{andrychowicz2017hindsight} and potential-based reward shaping (PBRS) \cite{ng1999policy} densifies learning signals under sparse rewards through goal relabeling or external shaping functions, but \alg can achieve a similar effect through hint-based rollout initialization. Finally, while prior works like Kickstarting \cite{schmitt2018kickstarting} and Expert Iteration \cite{anthony2017thinking} explore teacher-student transfer via full trajectories or alternating control, \alg explicitly focuses on sparse expert intervention with minimal demonstrations. Some earlier work, such as Model-Based Policy Optimization (MBPO) \cite{janner2019trust}, propose to automatically update the predictive model during policy training and only learn through short rollouts, but it does not include the hint from expert traces, which will lead to the improvement limitation in LLM post-training domain. 

\section{Discussion and Limitations}\label{sec:conclude}

Our work introduces BREAD as a principled algorithm to densify the reward signal using branched rollouts from the expert trace. BREAD significantly enhances sample complexity, training time, and eventual accuracy over employing GRPO. It is also theoretically well motivated and allows the model to overcome fundamental bottlenecks of SFT+GRPO. BREAD has also a few limitations: Firstly, we focus on SLMs and assume that there is a strong expert/teacher model to provide high-quality traces. This assumption can be weakened as we can use BREAD whenever successful traces are obtained and use standard RL otherwise. Secondly, it is possible that SLM may fail to obtain a reward signal even from partial traces of the teacher. In the extreme scenario, the student model can't even conclude even if the full solution by the expert is presented as it can't follow the argument at all. This shortcoming could potentially be addressed by training or prompting the expert model to generate simpler more digestible traces. We leave these as future directions.




\section*{Acknowledgements}\vspace{-5pt}

This work is supported by the National Science Foundation grants CCF-2046816, CCF-2403075, CCF-2212426, the Office of Naval Research grant N000142412289, and an Adobe Data Science Research Award. The computational aspects of the research is generously supported by computational resources provided by the Amazon Research Award on Foundation Model Development. 
\bibliography{neurips_2025}
\bibliographystyle{plain}

\newpage

\appendix
\begin{algorithm}[b!]
\caption{\textbf{BREAD}: GRPO with Branch Rollouts and Expert Anchors (full version)}
\begin{algorithmic}[1]
\Require Dataset $\mathcal{D}$, current policy $\pi_\theta$, sampling number $G$, a list of keywords for splitting episodes $[w_0,w_2,...,w_J]$
\Procedure{\texttt{\alg}}{$\mathcal{D},\pi_\theta,G,[w_0,w_2,...,w_J]$}
    \For{$\text{step}=1,2,...,N$}
    \State Sample a batch $\mathcal{D}_b$ from $\mathcal{D}$
    \State Update the old policy model $\pi_{\text{old}}\gets \pi_\theta$
    \For{each question and expert solution pair $(Q,S)$ in $\mathcal{D}_b$}
    \State $[R_1,R_2,...,R_G], \rho\gets$ \texttt{EAS}($Q,S,\pi_\theta,G,[w_0,w_2,...,w_J]$) \Comment{Episode Anchor Search (\texttt{EAS)}}
    \State Add the $(Q,\rho),R_i$ to the buffer.
    \EndFor
    \State For each $(Q,\rho),o_i$ in the buffer, compute $\hat{A}_{i,t}$ for the $t$-th token of $o_i$ \Comment{\Cref{eq:adv}}
    \EndFor
    \For{$\text{iteration}=1,2,...,T$}
    \State Update the policy model $\pi_\theta$ by maxmizing the \alg objective \Comment{\Cref{eq:BREAD_obj}}
    \EndFor
    \State \textbf{return} $\pi_\theta$
\EndProcedure
\Procedure{\texttt{EAS}}{$Q,S,\pi,G,[w_0,w_2,...,w_J]$}
    \State $[R_1,R_2,...,R_G] \gets \pi(Q)$ \Comment{Sample $I$ responses for question $Q$ with current policy $\pi$}
    \State $p_{\text{correct}} \gets \mathtt{compute\_correct\_probability}([R_1,R_2,...,R_G],S)$ 
    \State \Comment{compute correct probability based on responses and the gold solution}
    \If{$0< p_{\text{correct}}$}
        \State \textbf{return} $[R_1,R_2,...,R_G], NA$
    \Else
        \State $[e_0,e_1,...,e_K] \gets\mathtt{split\_episodes}(S,[w_0,w_2,...,w_J])$
        \State \textbf{return} $\mathtt{BINARY\_SEARCH\_AND\_GENERATE}(Q,S,\pi,G,[e_1,e_2,...,e_K],0,K)$ 
    \EndIf
\EndProcedure
\Procedure{\texttt{binary\_search\_and\_generate}}{$Q,S,\pi,G,[e_1,e_2,...,e_K],L,R$}
    \State $M=\frac{L+R}{2}$
    \State $[R_1,R_2,...,R_G] \gets \pi([Q,e_1,e_2,...,e_M])$
    \State $p_{\text{correct}} \gets \mathtt{compute\_correct\_probability}([R_1,R_2,...,R_G],S)$
    \If{$0 < p_{\text{correct}} < 1$}
        \State \textbf{return} $[R_1,R_2,...,R_G], S_{1:M}$
    \ElsIf{$p_{\text{correct}}=0$}
        \State $L=M,R=R,M=\frac{L+R}{2}$
        \State \textbf{return} $\mathtt{binary\_search\_and\_generate}(Q,S,\pi,G,[e_1,e_2,...,e_K],L,R)$
    \Else
        \State $L=L,R=M,M=\frac{L+R}{2}$
        \State \textbf{return} $\mathtt{binary\_search\_and\_generate}(Q,S,\pi,G,[e_1,e_2,...,e_K],L,R)$
    \EndIf
\EndProcedure
\end{algorithmic}
\label{alg:BREAD_full}
\end{algorithm}

\textbf{The Supplementary Material is organized as follows:}
\begin{itemize}
    \item In \Cref{appendix:experiment}, we explain the calculation of estimated FLOPs (\Cref{app:flops}) and additional details about the training and evaluation of \alg (\Cref{subsec:appendix_exp_details}). \vspace{-5pt}
    \item In \Cref{app:add_exp}, we show that SFT further cannot improve the performance of models (\Cref{app:exp_sft}), \alg can reduce the number of rollouts required during the sampling stage in training (\Cref{app:exp_rollout}), and the distribution of the step number of expert solutions (\Cref{app:episode}). \vspace{-5pt}
    \item In \Cref{app:golden}, we explain the baseline \texttt{GRPO w/ Expert Trace} in detail and compare it to RL with an SFT loss. \vspace{-5pt}
    \item In \Cref{appendix:markov}, we present a detailed formulation of the Markov model introduced in Sections~\ref{subsection:toy} and \ref{sec:formal guarantee}, and provide concise proof of related theorems. 
\end{itemize}
\section{Experiment details}
\label{appendix:experiment}
\subsection{FLOPs} \label{app:flops}
We compute the estimate FLOPS following~\cite{snell2024scaling,hoffmann2022training,sardana2023beyond}. The supervised finetuning, which include one forward and one backward phase, people use a common approximation $6ND$~\cite{hoffmann2022training}, and for inference, which include only one forward phase, people always use $2ND$~\cite{sardana2023beyond}. Here $N$ represents model
parameters, $D$ is the total token count processed in that pass. So a forward phase takes $2ND$ while a backward phase take $4ND$. For estimation, we define the average length of a single question in one inference time as $D_\text{sample}$, so $D=D_\text{sample}\times n_\text{rollout}$ for RL and $D=D_\text{sample}$ for SFT.
For estimation, we define the average length of a single question in one inference time as $D_\text{sample}$, so $D=D_\text{sample}\times n_\text{rollout}$ for RL and $D=D_\text{sample}$ for SFT.
GRPO with eight rollouts, $n_\text{rollout} = 8$ including eight forward and eight backward phase needs $6\times 8\times ND_\text{rollout}$. \alg will take more forward pass to do the binary search, so we use $6*8*ND_\text{rollout}+4ND_\text{additional}$. \texttt{GRPO w/ Expert trace} includes including eight forward and nine backward phase, so we use $6\times8\times ND_\text{rollout}+4\times ND_\text{rollout}$.
 We estimate $D$ with the average length of all expert trace in the training dataset. We truly record the additional number of generation $D_\text{additional} = D_\text{rollout}\times n_\text{additional\_rollout}$. Notably, the generation length of \alg is always shorter than expert trace and vanilla GRPO. So our \alg can reduce even more computation resource compare to 75\% shown in \Cref{fig:train_flops}.  

\subsection{Experiment Setting Details for \alg}
\label{subsec:appendix_exp_details}
In addition to \Cref{sec:experiments}, we list additional details of our experiments here. During training, we set $\texttt{max\_prompt\_length}= 2048$ for MATH experiments and $\texttt{max\_prompt\_length}=4096$ for NuminaMath-CoT experiments. For both training and evaluation, we set $\texttt{max\_response\_length}= 4096$ for MATH experiments and $\texttt{max\_response\_length}=8192$ for NuminaMath-CoT experiments. During training, we set the number of rollouts as 8. We used a low-variance KL divergence and set the coefficient for KL divergence as 0.001. We set the temperature at 0.6 for both training and evaluation. 

For the implementation details of the episode splitting for the expert trace during training, we split the expert trace with sentences separated by \texttt{``. ''} or \texttt{``\textbackslash n''} for MATH and paragraphs separated by \texttt{``\textbackslash n\textbackslash n''} for NuminaMath-CoT. Note that the expert traces can be either human provided solutions or correct solutions provided by larger expert models, and the splitting method here can also be changed to split according to a specific keyword list. After splitting, we aggregate the traces into 10 episodes evenly for all expert traces. The reason why we implement this way is that we want to make sure that the number of episodes of the expert traces is not too large, which can guarantee that the \texttt{EAS} step does not take too much time. 

During training, assume we have a question $Q$ and an expert hint $\rho$ ( or without $\rho$ during inference), the template of the prompt is as follows:

\texttt{\{`content': `<|im\_start|>system\textbackslash nYou are a helpful assistant. You first thinks about the reasoning process in the mind and then provides the user with the answer.<|im\_end|>\textbackslash n<|im\_start|>user\textbackslash n\{$Q,\rho$\} Show your work in <think> <\textbackslash think> tags. And return the final answer within \textbackslash\textbackslash boxed\{\}.<|im\_end|>\textbackslash n<|im\_start|>assistant\textbackslash nLet me solve this step by step.\textbackslash n<think>', `role': `user'\}}

For the hardware requirements, all of experiments are done with 8 L40S 40GB GPUs except the training starting from Qwen2.5-3B-Instruct as the base model, which requires 8 80G H100 GPUs. 

\section{Additional Experiments} 
\label{app:add_exp}
\subsection{SFT training saturates and does not help after a while}
Supplement to \Cref{fig:train_steps}. As shown in \Cref{fig:train_step_SFT_long}, training SFT for more iterations does not further improve test accuracy. In other words, we already use the model that SFT can achieve. For a fair comparison, we therefore use the 300-step checkpoint as the starting point for \texttt{SFT+GRPO}. 
\label{app:exp_sft}
\begin{figure}[htbp]
  \centering
  \begin{subfigure}[b]{0.48\linewidth}
    \centering
    \includegraphics[width=\linewidth]{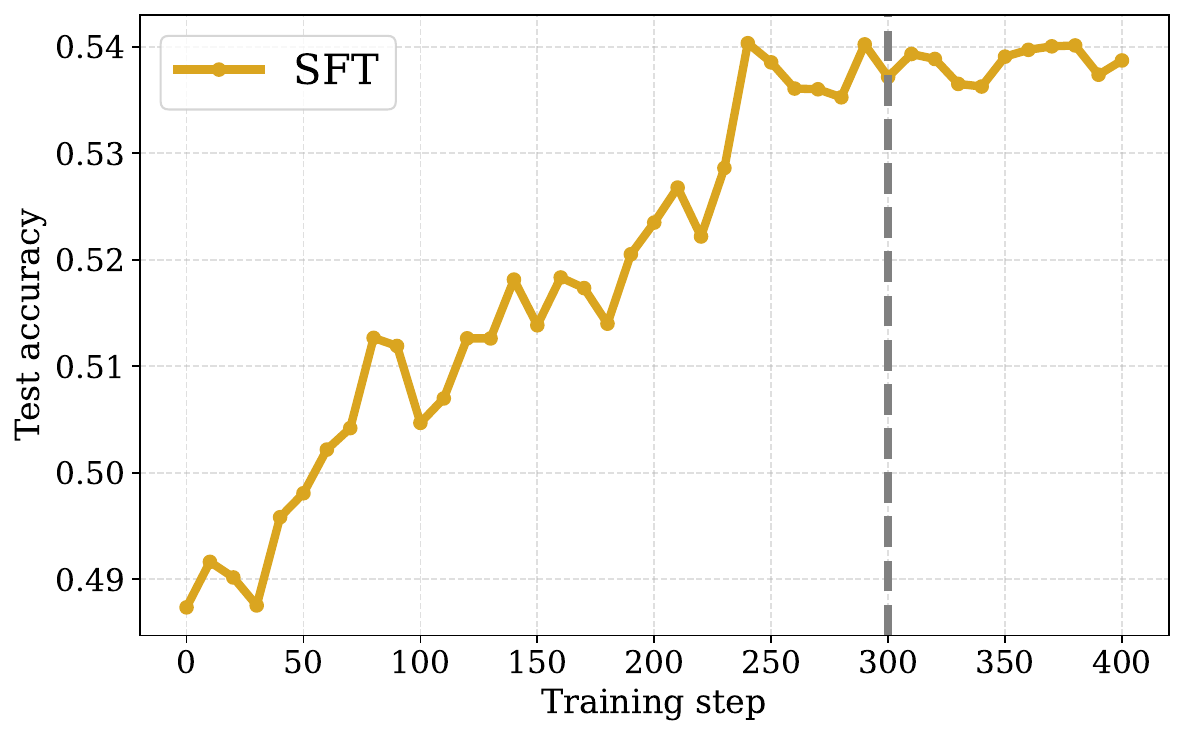}
  \end{subfigure}

    \caption{Test accuracy of SFT over training steps. The accuracy doesn't continuously increase after the number of training step we use.}
    \label{fig:train_step_SFT_long}
\end{figure}
\subsection{\alg can reduce the number of rollouts needed}
\label{app:exp_rollout}
Another straightforward way to reduce training cost is to reduce the number of rollouts, since FLOPs scale linearly with that count. However, for vanilla GRPO, fewer rollouts lead to lower accuracy, whereas \alg maintains its performance even as the number of rollouts decreases. The result is shown in \Cref{fig:train_flops_rollout}. As we have shown in the main body, when the task is so difficult that the base policy rarely produces correct traces, vanilla GRPO fails. To study rollout efficiency under conditions where GRPO can learn, we moved to the MATH dataset with a Qwen-2.5-3B-Instruct base model. With eight rollouts, GRPO does increase accuracy. However, performance will decrease if the rollout budget is reduced from 8 to 5. In contrast, BREAD reaches the same accuracy with just five rollouts, cutting training FLOPs while preserving performance.
\begin{figure}[htbp]
  \centering

  \begin{subfigure}[b]{0.48\linewidth}
    \centering
    \includegraphics[width=\linewidth]{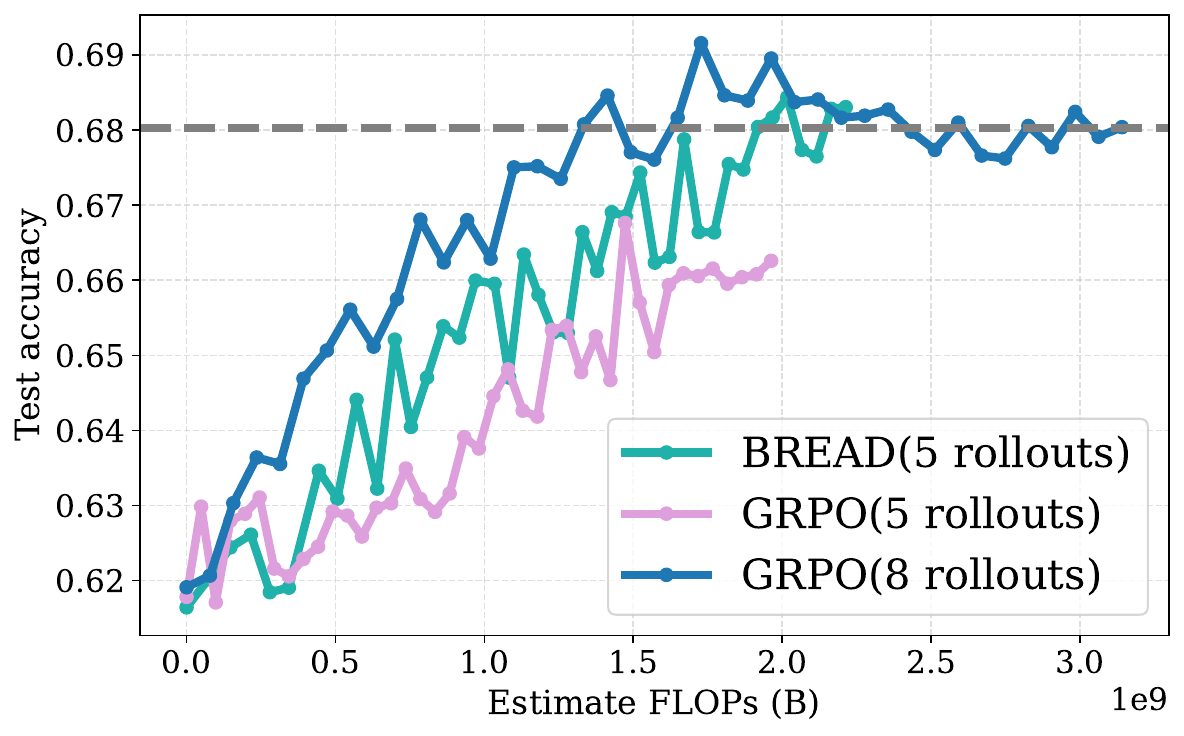}
  \end{subfigure}

    \caption{Test accuracy over training steps with different number of rollouts. The gray dashed line shows the final accuracy of vanilla GRPO with 8 rollouts. The total training step is 500.}
    \label{fig:train_flops_rollout}
\end{figure}
\subsection{Episode details}
\label{app:episode}
As described in \Cref{subsec:appendix_exp_details}, we plot the distribution of the number of steps for our two datasets before episode aggregation. As shown in \Cref{fig:solution_step_distribution}, we can see that that most expert traces contain less than 20 steps, while few of them contain much more steps, which may slow down the training sif there ised if there is no episode aggregation. 
 
\begin{figure}[htbp]
  \centering
  \begin{subfigure}[b]{0.45\linewidth}
    \centering
    \includegraphics[width=\linewidth]{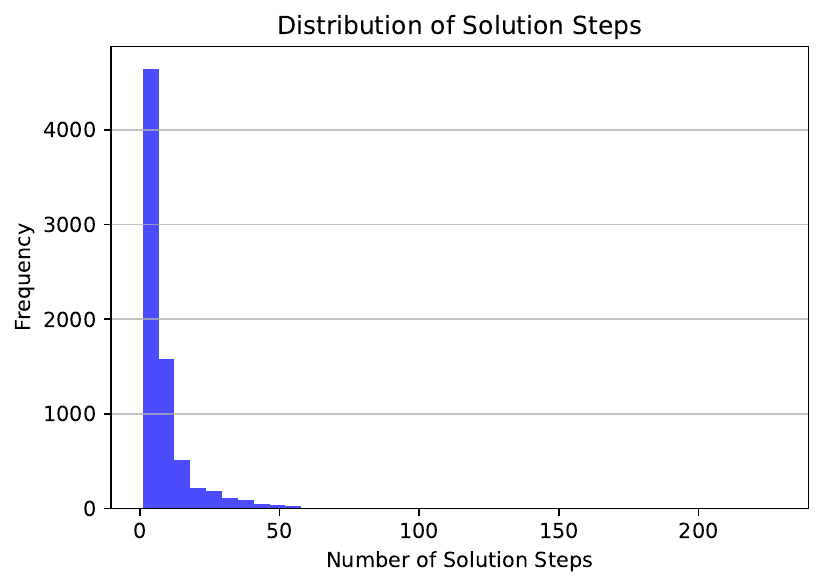}
    \caption{Distribution plot of MATH solution step numbers.}
    \label{fig:DeepScaleR-Preview-Dataset_solution_steps}
  \end{subfigure}
  \hfill 
  \begin{subfigure}[b]{0.48\linewidth}
    \centering
    \includegraphics[width=\linewidth]{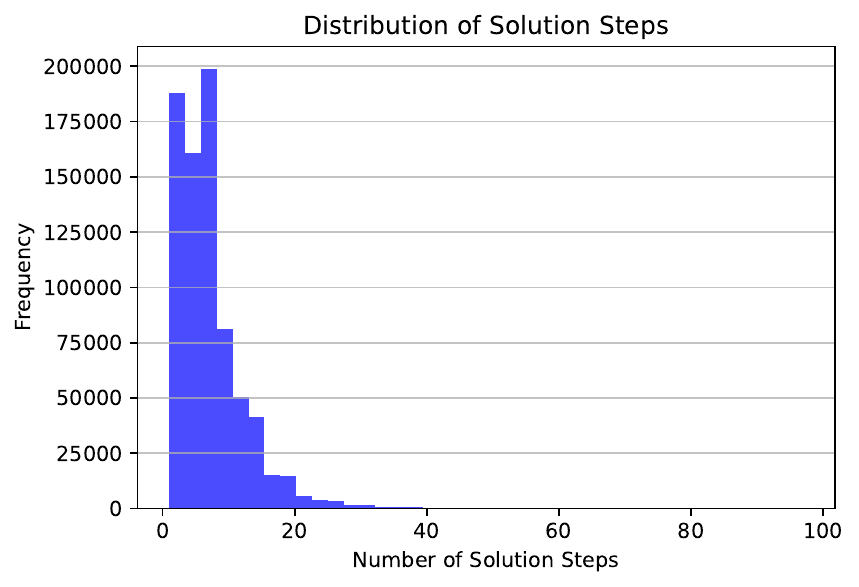}
    \caption{Distribution plot of NuminaMath-CoT solution step numbers.}
    \label{fig:NuminaMath-CoT_solution_steps}
  \end{subfigure}

  \caption{Distribution plot of MATH and NuminaMath-CoT solution step numbers}
  \label{fig:solution_step_distribution}
\end{figure}

\section{\texttt{GRPO w/ Expert Trace} Details and its Connection with RL with SFT Loss}
\label{app:golden}
The baseline \texttt{GRPO w/ Expert Trace} (GRPO-ET) generally follows the same procedure as the standard GRPO. Instead, it enforces one of the $G$ rollouts as the expert trace. Therefore, the objective function is
{\footnotesize
\begin{align}
    \mathcal{J}_\text{GRPO-ET}(\theta)&=\mathbb{E}_{(q,S,a)\sim\mathcal{D},\{o_i\}_{i=1}^{G-1}\sim\pi_\text{old}(\cdot|q)} \notag\\
    &\left[\frac{1}{G}\left(\sum_{i=1}^{G-1}\frac{1}{|o_i|}\sum_{t=1}^{|o_i|}\left(\min\left(r_{i,t}(\theta)\hat{A}_{i,t},\text{clip}\left(r_{i,t}(\theta),1-\varepsilon,1+\varepsilon\right)\hat{A}_{i,t}\right)-\beta D_{\text{KL}}(\pi_\theta||\pi_\text{ref})\right)\right)\right] \notag\\
    & +\frac{1}{G|o_S|}\sum_{t=1}^{|o_S|}\left(\min\left(r_{i,t}(\theta)\hat{A}_{S,t},\text{clip}\left(r_{i,t}(\theta),1-\varepsilon,1+\varepsilon\right)\hat{A}_{S,t}\right)-\beta D_{\text{KL}}(\pi_\theta||\pi_\text{ref})\right)
    \label{eq:GRPO_ET_obj}
\end{align}
}
where
{\footnotesize
\begin{align}
    r_{i,t}(\theta)=\frac{\pi_{\theta}(o_{i,t}|q,o_{i<t})}{\pi_{\text{old}}(o_{i,t}|q,o_{i<t})},\quad \hat{A}_{i,t}=\frac{R_i-\text{mean}(\{R_i\}_{i=1}^{G-1},R_S)}{\text{std}(\{R_i\}_{i=1}^{G-1},R_S)},\quad \hat{A}_{S,t}=\frac{R_S-\text{mean}(\{R_i\}_{i=1}^{G-1},R_S)}{\text{std}(\{R_i\}_{i=1}^{G-1},R_S)}
\label{eq:adv_ET}
\end{align}
}

All notations in \Cref{eq:GRPO_ET_obj,eq:adv_ET} are the same as \cite{guo2025deepseek}, while $S$ represents the expert trace, and all variables whose subscript contains $S$ represent the corresponding variables. 

Here, we can see a clear connection between \texttt{GRPO w/ Expert Trace} and GRPO (rollout number is $G-1$) with an SFT loss. Suppose that we want to deploy GRPO while adding the SFT entropy loss to the standard GRPO loss, the objective function is
{\footnotesize
\begin{align}
    \mathcal{J}_\text{GRPO\_SFT}&=\mathbb{E}_{(q,S,a)\sim\mathcal{D},\{o_i\}_{i=1}^{G-1}\sim\pi_\text{old}(\cdot|q)} \notag\\
    &\left[\frac{1}{G}\left(\sum_{i=1}^{G-1}\frac{1}{|o_i|}\sum_{t=1}^{|o_i|}\left(\min\left(r_{i,t}(\theta)\hat{A}_{i,t},\text{clip}\left(r_{i,t}(\theta),1-\varepsilon,1+\varepsilon\right)\hat{A}_{i,t}\right)-\beta D_{\text{KL}}(\pi_\theta||\pi_\text{ref})\right)\right)\right] \notag\\
    &+\sum_{t=1}^{|o_S|} \log \pi_\theta(S_t \mid q, S_{<t})
    \label{eq:GRPO_SFT_obj}
\end{align}
}
where
{\footnotesize
\begin{align}
    r_{i,t}(\theta)=\frac{\pi_{\theta}(o_{i,t}|q,o_{i<t})}{\pi_{\text{old}}(o_{i,t}|q,o_{i<t})},\quad \hat{A}_{i,t}=\frac{R_i-\text{mean}(\{R_i\}_{i=1}^{G-1},R_S)}{\text{std}(\{R_i\}_{i=1}^{G-1},R_S)}
\label{eq:adv_GRPO_SFT}
\end{align}
}

There are 3 main difference between \Cref{eq:GRPO_ET_obj} and \Cref{eq:GRPO_SFT_obj}:
\begin{enumerate}
    \item A coefficient for the expert trace loss $\frac{1}{G|o_S|}$
    \item A KL divergence term $-\beta D_{\text{KL}}(\pi_\theta||\pi_\text{ref})$
    \item An Advantage term $\min\left(r_{i,t}(\theta)\hat{A}_{S,t},\text{clip}\left(r_{i,t}(\theta),1-\varepsilon,1+\varepsilon\right)\hat{A}_{S,t}\right)$
\end{enumerate}
Suppose that $r_{i,t}\leq 1+\varepsilon$, because $\hat{A}_{S,t}\geq 0$, $\min\left(r_{i,t}(\theta)\hat{A}_{S,t},\text{clip}\left(r_{i,t}(\theta),1-\varepsilon,1+\varepsilon\right)\hat{A}_{S,t}\right)=r_{i,t}(\theta)\hat{A}_{S,t}$, which is exactly the entropy loss with a coefficient $\hat{A}_{S,t}$ if we replace $\pi_{\text{old}}(o_{i,t}|q,o_{i<t})$ in $r_{i,t}(\theta)$ with the one hot embedding of the expert trace tokens. Therefore, \texttt{GRPO w/ Expert Trace} can not only have better training consistency, but can also assign different credits according to the token probability ratio between the old and the current policy. 

\begin{algorithm}
\small
\caption{\textbf{GRPO w/ Expert Trace} (GRPO-ET)}
\begin{algorithmic}[1]
\Require Dataset $\mathcal{D}$, current policy $\pi_\theta$, sampling number $G$
\Procedure{\texttt{GRPO\_with\_Expert\_Trace}}{$\mathcal{D},\pi_\theta,G$}
    \For{$\text{step}=1,2,...,N$}
    \State Sample a batch $\mathcal{D}_b$ from $\mathcal{D}$
    \State Update the old policy model $\pi_{\text{old}}\gets \pi_\theta$
    \State Sample $G-1$ outputs $\{o_i\}_{i=1}^{G-1}\sim \pi_\text{old}(\cdot|Q)$ for each question $Q\in \mathcal{D}_b$
    \State Combine the expert trace with the sampled outputs to construct $\left\{\{o_i\}_{i=1}^{G-1},S\right\}$
    \State Compute rewards $\left\{\{r_i\}_{i=1}^{G-1},r_S\right\}$ for each sampled output $o_i$ and expert trace $S$ 
    \State \Comment{$r_S$ is generally 1 because of solution correctness}
    \State For each $o_i$ and expert trace $S$. compute $\hat{A}_{i,t}$ for the $t$-th token of $o_i$ and $S$. \Comment{\Cref{eq:adv_ET}}
    \For{iteration = $1,...,T$}
    \State Update the policy model $\pi_\theta$ by maximizing the GRPO-ET objective \Comment{\Cref{eq:GRPO_ET_obj}}
    \EndFor
    \EndFor
    \State \textbf{return} $\pi_\theta$
\EndProcedure
\end{algorithmic}
\label{alg:GRPO_with_Expert}
\end{algorithm}
\section{Supporting Material for Section \ref{subsection:toy}: Proofs and Experimental Details}
\label{appendix:markov}

\subsection{Experimental Details}
Recall that we consider a Markov chain with $K+1$ states (indexed by $0,1,2,\cdots,K$) and a $(K+1)\times (K+1)$ transition matrix. 
We assume that, for the expert/large model, all $(i\to j)$ transitions are learnable in the transition matrix. However, for a small/student model, not all $(i\to j)$ transitions are learnable and we denote  the learnable state transitions by ${\cal{P}}\subset[K+1]\times [K+1]$ where $[K]=\{0,1,2,\dots,K\}$. 

We first introduce the following definition of pretrained small model utilized in our experiments. 

\begin{definition}[Pretrained Small Model]\label{def:small model}
    A pretrained small model is defined as a Markov model $\gM = (\gS, \mP)$, where $\gS = \{0,1, 2, \dots, K\}$ is the state space and $\mP \in \mathbb{R}^{(K+1) \times (K+1)}$ is the transition matrix. The model satisfies the following properties:
    \begin{itemize}[nosep]
        \item Let $d\geq 1$ denote the maximum allowed jump between states. The set of learnable state transitions $\cal{P}$ satisfies:
    \[
    (i,j)\in{\cal{P}}\quad\text{if and only if}\quad |i-j|\leq d.
    \]
    \item For some $\eps\ll1/d$, the transition probabilities  satisfy:  
    \[
    \mP_{ij}:=\mathbb{P}(i\to j)=\begin{cases}
        1-\Theta(d\eps),&\text{if }i=j,\\
        \Theta(\eps),&\text{if }(i,j)\in\gP\text{ and }i\neq j,\quad\\
        0,&\text{if }(i,j)\not\in\gP.\quad
    \end{cases}
    \]
    \end{itemize}
\end{definition}

In the following, we describe the BREAD algorithm for the Markov model. Our algorithm is in line with the theory presented in Section \ref{sec:formal guarantee} and incrementally learns a solution by optimizing the local transition dynamics starting from the right end of the expert trace. 
\begin{algorithm}
\small
\caption{\textbf{BREAD\_Markov}}
\begin{algorithmic}[1]
\Require An SFT trajectory $[\alpha_0,\cdots,\alpha_T]$, initial small model $\gM([K],\mP)$, reward threshold $r_{\text{thred}}$, maximal trajectory length $T_{\text{max}}$
    \For{\text{episode} $t=T,T-1,\dots,1$}
    \State $r\gets0$
    \While{$r<r_{\text{thred}}$}
    \State Sample $N$ trajectories from $\gM$ starting from state $\alpha_t$ with maximal length $T_{\text{max}}-t$
    \State Update the current reward: $r\gets r_{\text{new}}$
    \State Update the transition matrix: $\mP\gets \mP_{\text{new}}$
    \EndWhile 
    \EndFor
\end{algorithmic}
\label{alg:BREAD Markov}
\end{algorithm}

\noindent \textbf{Experimental settings for Figure~\ref{fig:theory}:} We conduct experiments by fine-tuning a Markov chain model (cf.~Definition \ref{def:small model}) using three different methods: SFT, GRPO, and BREAD. To introduce additional randomness in our experiments, for each state $i\in[K]$, we randomly select one of its connected states (e.g., among $i-d,\dots,i,i+d$ states) and assign it the highest transition probability of $1-\Theta(\eps)$, instead of always assigning the highest probability to $\sP(i\to i)$. Due to symmetricity, this is not expected to change the fundamental behavior of the algorithms but provides more variability. In both Figs.~\ref{fig:markov_diff_eps} and \ref{fig:markov_diff_n}, $d=2,T_{\text{max}}=2\cdot K$ and SFT is infeasible \so{with expert jump size of $3$ as we set $T=K/3$}. In Fig.~\ref{fig:markov_diff_eps}, we fix $K=30$ and vary $\eps\in\{0.01,0.025,0.05\}$. In contrast, in Fig.~\ref{fig:markov_diff_n}, the $\eps=0.05$ is remained unchanged and the number of states varies in $K\in\{30,50,100\}$. Each experiments is trained for 10000 iterations with 1000 trajectories sampled with each iteration.

\subsection{Proofs in Section \ref{sec:formal guarantee}}

\subsubsection{Proof of Lemma \ref{lem fail}}
Since the expert model always jumps two or more whereas the student is restricted to a jump size of at most one, the SFT phase (specifically maximizing log-likehood using expert trace) will have no impact on the student model. This is because the supervised loss, induced by the negative log-likelihood, is not optimizable beyond infinity due to 0 transition probabilities assigned between $x_{t}\rightarrow x_{t+1}$. To conclude the proof, we show that, no trajectory sampled during the RL phase will receive a reward with high probability. Since the SFT phase has no impact on the base student model, the student still follows a symmetric random walk. Let $(x^s_i)_{i=1}^L$ denote this random walk which is a sum of IID Rademacher variables. The classical concentration bounds on the hitting-time of random walks state \cite{durrett2019probability,feller1991introduction}
\[ 
\mathbb{P}(\exists~x^s_i=K~\text{for}~1\le i\le L)\le 2e^{-K^2/2L}.
\]
Thus, with access to $N\le e^{K^2/4L}$ samples, applying union bound, we guarantee that no trace generated by the student model receives a reward.

\subsubsection{Proof of Theorem \ref{bread success}}
We first recall the following lemma on random walk hitting times.

\begin{lemma} [Lower bound on random walk hitting time, \cite{feller1991introduction}]\label{fact lem} If $X_i$ is a symmetric random walk initialized at $0$, we have that \red{$\mathbb{P}(\max_{1\le i\le L} X_i\ge K)\ge 1-0.8 K/\sqrt{L}$.}
\end{lemma}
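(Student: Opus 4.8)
The plan is to reduce the hitting-time statement to a one-dimensional local estimate on the endpoint $X_L$ via the reflection principle, and then control that estimate by counting parity-feasible lattice points and bounding the point probabilities of the symmetric binomial distribution.

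First I would set $M_L=\max_{0\le i\le L}X_i$; since $X_0=0$ and $K\ge 1$, this coincides with the maximum over $1\le i\le L$ in the statement. The reflection principle for the $\pm 1$ walk gives, for each $m<K$, a bijection between paths that reach level $K$ by time $L$ and end at $m$ and paths ending at $2K-m$, whence $\mathbb{P}(M_L\ge K)=\mathbb{P}(X_L>K)+\mathbb{P}(X_L\ge K)=2\mathbb{P}(X_L\ge K)-\mathbb{P}(X_L=K)$. Invoking the symmetry $\mathbb{P}(X_L\ge K)=\mathbb{P}(X_L\le -K)$ collapses this into the exact identity
\begin{equation}
\mathbb{P}(M_L\ge K)=1-\mathbb{P}(-K<X_L\le K),
\end{equation}
so it suffices to prove $\mathbb{P}(-K<X_L\le K)\le 0.8\,K/\sqrt{L}$.

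Next I would bound this probability termwise. Since $X_L=\sum_{i=1}^L\xi_i$ with $\xi_i\in\{-1,+1\}$, the walk only takes values of the same parity as $L$; among the $2K$ integers in $(-K,K]$ exactly $K$ share that parity, so the sum defining $\mathbb{P}(-K<X_L\le K)$ has exactly $K$ nonzero terms. Each term is at most the modal point probability $\max_j\mathbb{P}(X_L=j)$, and the sharp Stirling estimate for the central binomial coefficient yields $\max_j\mathbb{P}(X_L=j)\le\sqrt{2/(\pi L)}$ (for odd $L$ I would rewrite the odd modal coefficient via $\binom{2n+1}{n}=\tfrac12\binom{2n+2}{n+1}$ to reduce it to the even central coefficient at time $L+1$, which only tightens the bound). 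Combining the count with the point bound gives $\mathbb{P}(-K<X_L\le K)\le K\sqrt{2/(\pi L)}=\sqrt{2/\pi}\,K/\sqrt{L}$, and substituting back into the identity completes the argument.

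The main obstacle is numerical sharpness rather than structure: because $\sqrt{2/\pi}\approx 0.798$ only barely undercuts the target constant $0.8$, the modal-probability estimate cannot be loose — I must use the tight inequality $\binom{2n}{n}2^{-2n}\le 1/\sqrt{\pi n}$ and dispatch the odd-$L$ case carefully so it does not inflate the constant. Everything else (the reflection identity and the parity count) is exact, so the whole bound hinges on this single clean binomial estimate.
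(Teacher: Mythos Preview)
Your argument is correct. The reflection-principle identity $\mathbb{P}(M_L\ge K)=1-\mathbb{P}(-K<X_L\le K)$ is derived exactly as you describe, the parity count of $K$ admissible lattice points in $(-K,K]$ is right, and the modal-mass bound $\binom{2n}{n}2^{-2n}\le 1/\sqrt{\pi n}$ (with your odd-$L$ reduction to $L+1$) yields the constant $\sqrt{2/\pi}\approx 0.7979<0.8$ with no slack to spare.

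As for comparison: the paper does not actually prove this lemma at all---it is simply quoted from Feller's textbook as a known fact and then invoked in the proof of Theorem~\ref{bread success}. So you have supplied a self-contained proof where the paper gives only a citation. Your route (reflection principle followed by a pointwise binomial bound) is exactly the classical argument one finds in Feller, so in spirit your proposal matches the cited source even though the paper itself offers nothing to compare against.
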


To proceed, the proof will be done inductively. Suppose for all rounds $i<\tau$, the followings hold:
\begin{itemize}
\item Each round requires at most $t$ traces to achieve success with probability at least $1-e^{-t}$.
\item All sub-traces generated by the small model at round $i$ (starting from the expert's $x_{T-i}$) that are stitched to $S_{i-1}$ (trace constructed so far) is at most length $L/T$.
\end{itemize}
To complete the induction, we consider the new round $\tau$ and prove the two statements above. At round $\tau$, we start from State $x_{T-\tau}$ and aim to reach a State within $S_{\tau-1}$. This process will follow a symmetric random walk initialized at $x_{T-\tau}$. By design, $x_{T-\tau+1}$ is within $S_{\tau-1}$. As a result, the trajectory length \red{of the subtask} is upper bounded by the time it takes for the random walk to reach from $x_{T-\tau}$ to $x_{T-\tau+1}$. \red{ Consequently, the total trajectory length is upper bounded by the sum over all subtasks. Since each subtask is independent and identically distributed, we focus on analyzing a single subtask with maximal trace length $L/T$.} Recall from the assumptions in Section \ref{sec:formal guarantee} that $|x_{T-\tau}-x_{T-\tau+1}|\le c\cdot K/T$. To this aim, we apply Lemma \ref{fact lem} which ensures that, probability of hitting within $L/T$ trace length is at least $1-0.8\frac{c\cdot K/T}{\sqrt{L/T}}=1-0.8\frac{c\cdot K}{\sqrt{LT}}$. Since $L\ge 5c^2K^2/T$, probability of failure for each trace is $\le 1/e$. Over $t$ IID traces, probability of success at any round $i$ is at least $1-e^{-t}$. Aggregating these events, we find that the probability of success over $T$ rounds is at least $1-Te^{-t}$ given $t$ BREAD rollouts at each round.

\end{document}